\DeclareMathOperator{\Agg}{A}
\DeclareMathOperator{\tnorm}{T}
\DeclareMathOperator{\snorm}{S}
\DeclareMathOperator{\fzimp}{I}
\DeclareMathOperator{\fzneg}{N}
\DeclareMathOperator{\argmax}{argmax}
\DeclareMathOperator{\argmin}{argmin}
\DeclareMathOperator{\mean}{mean}
\DeclareMathOperator*{\Sat}{SatAgg}
\DeclareMathOperator{\softmax}{\sigma}
\DeclareMathOperator{\sigmoid}{S}
\def\G{\mathcal{G}}
\newcommand{\imp}{\rightarrow}
\newcommand{\K}{\mathcal{K}}
\newcommand{\R}{\mathbb{R}}
\def\btheta{\boldsymbol{\theta}}
\newcommand{\Gt}{\mathcal{G}_{\btheta}}
\DeclareMathOperator{\Tprod}{T_{P}}
\DeclareMathOperator{\Tluk}{T_{L}}
\DeclareMathOperator{\Smax}{S_{M} }
\DeclareMathOperator{\Sprod}{S_{P}}
\DeclareMathOperator{\Ns}{N_{S}}
\DeclareMathOperator{\Atprod}{A_{\Tprod}}
\DeclareMathOperator{\Asmax}{A_{\Smax}}
\newcommand{\logLTN}{\textrm{logLTN}}
\newcommand{\LTN}{\textrm{LTN}}
\DeclareMathOperator{\LSE}{LSE}
\DeclareMathOperator{\LME}{LME}
\newlength\myheight
\newlength\mydepth
\settototalheight\myheight{Xygp}
\newcommand*\inlinegraphics[1]{%
  \settototalheight\myheight{Xygp}%
  \settodepth\mydepth{Xygp}%
  \raisebox{-\mydepth}{\includegraphics[height=\myheight]{#1}}%
}
\newtheorem{theorem}{Theorem}
\newtheorem{corollary}{Corollary}[theorem]
\theoremstyle{definition}
\newtheorem{example}{Example}
\title{logLTN: Differentiable Fuzzy Logic in the Logarithm Space}
\author{%
  Samy Badreddine\\
  Sony AI\\
  Tokyo\\
  Japan \\
  \texttt{samy.badreddine@sony.com} \\
  \And
  Luciano Serafini\\
  Fondazione Bruno Kessler\\
  Trento\\
  Italy \\
  \texttt{serafini@fbk.eu} \\
  \And
  Michael Spranger\\
  Sony AI\\
  Tokyo\\
  Japan \\
  \texttt{michael.spranger@sony.com} \\
}
\begin{document}
\maketitle

\begin{abstract}
    The AI community is increasingly focused on merging logic with deep learning to create Neuro-Symbolic (NeSy) paradigms and assist neural approaches with symbolic knowledge.
    A significant trend in the literature involves integrating axioms and facts in loss functions by grounding logical symbols with neural networks 
    and operators with fuzzy semantics. 
    Logic Tensor Networks (LTN) is one of the leading representatives in this category, known for its simplicity, efficiency, and versatility.
    However, it has been previously shown that not all fuzzy operators perform equally when applied in a differentiable setting. 
    Researchers have proposed several configurations of operators, trading off between effectiveness, numerical stability, and generalization to different formulas. 
    This paper presents a configuration of fuzzy operators for grounding formulas end-to-end in the logarithm space.
    Our goal is to develop a configuration that is more effective than previous proposals, able to handle any formula, and numerically stable. 
    To achieve this, we propose semantics that are best suited for the logarithm space and introduce novel simplifications and improvements that are crucial for optimization via gradient-descent.
    We use LTN as the framework for our experiments, but the conclusions of our work apply to any similar NeSy framework. 
    Our findings, both formal and empirical, show that the proposed configuration outperforms the state-of-the-art and that each of our modifications is essential in achieving these results.
\end{abstract}



\section{Introduction}
Recently, there has been an increasing interest in combining logic and neural networks in Neuro-Symbolic (NeSy) integrations. 
The goal of such systems is often to guide the learning of neural networks using symbolic knowledge, allowing them to reason at a higher level of abstraction. 
Much of the recent progress in this area has focused on developing differentiable approaches for knowledge representation and reasoning. 

A trend of approaches involves grounding logical symbols using neural networks and relaxing logical operators into continuous operations using fuzzy semantics.
The resulting formulas, such as $\forall x \exists y P(x,y) \lor R(x)$, are associated with a truth degree in the interval $[0,1]$ that represents their level of satisfiability. 
This satisfiability can then be derived with respect to the parameters of the neural networks that ground the symbols, 
and is incorporated in the loss function of said neural networks to act as an additional supervision when training. 
In this study, we conduct experiments and analyses using Logic Tensor Networks (LTN), a well-established framework for differentiable fuzzy logics. 

Previous research has highlighted that not all fuzzy operators are appropriate for this type of application. 
Different configurations of operators have been proposed in the literature, each with varying degrees of effectiveness, numerical stability, and applicability across different formulas. 
However, as of yet, no configuration has met all of these requirements simultaneously. 
The goal of this paper is to develop a configuration of operators that is superior to previous proposals and capable of handling any formula. 
To achieve this, we propose operators in the logarithm space, which is known to address certain issues.
We build upon existing findings and introduce novel improvements that are crucial for optimization through gradient-descent. 
We call our new solution \logLTN, and release it on the official github repository for LTN.
\footnote{\url{https://github.com/logictensornetworks/logictensornetworks}}

The foundation of the LTN framework is explained in Section \ref{s:ltn_background}.
In Section \ref{s:logltn}, we provide an in-depth examination of the semantics in the logarithm space, with limitations in section \ref{s:comparisons}. 
Our main contribution is in Section \ref{s:logltn_opti}, which includes all the key simplifications and computational techniques, along with their formal justifications, that improve the derivability of the framework.
In Section \ref{s:experiments}, we experimentally confirm that our proposition surpasses state-of-the-art configurations and, using ablation studies, that each of our modifications plays a critical role in achieving these results.
Our findings are expected to help all differentiable frameworks that rely on fuzzy semantics.

\section{Background on LTN}
\label{s:ltn_background}

\subsection{Real Logic concepts}

\def\Friend{\mathsf{is\_friend}}
\def\shortFriend{\mathsf{f}}
\def\bob{\mathsf{b}}
\def\alice{\mathsf{a}}
\def\charlie{\mathsf{charlie}}
\def\shortbob{\mathsf{b}}
\def\shortalice{\mathsf{a}}
\def\shortcharlie{\mathsf{c}}
\def\Italian{\mathsf{Italian}}
\def\height{\mathsf{height}}
\def\istall{\mathsf{is\_tall}}
\def\people{\mathsf{people}}
\def\bx{\mathbf{x}}
\def\by{\mathbf{y}}
\def\bestfriend{\mathsf{best\_friend}}
\def\Parent{\mathsf{is\_parent}}
\def\age{\mathsf{age}}

\label{s:ltn_semantics}
LTN is built on Real Logic, a first-order language that allows to specify relational knowledge about the world.
For example, the formula $\Friend(\alice,\bob)$ states that $\alice$ is a friend of $\bob$,
and the formula $\forall u \forall v (\Friend(u,v) \imp \Friend(v,u))$ states that $\Friend$ is a symmetric relation, 
where $u$ and $v$ are variables, $\alice$ and $\bob$ are individuals, and $\Friend$ is a predicate.

In Real Logic, a grounding $\G_\theta$ associates mathematical, real-valued semantics to every logical symbol depending on a set of parameters $\btheta$. 
Individuals are grounded with vectors of real values.
Often, the vectors come from real-world features and data.
A variable is grounded with a finite batch of individuals from a domain.
Finally, relations are grounded using mathematical functions (generally, neural networks) that map to the truth domain $[0,1]$. 

Complex formulas are constructed using the usual logical connectives and quantifiers $\land$, $\lor$, $\imp$, $\lnot$, $\forall$, $\exists$.
The connectives are grounded using t-norms fuzzy logic:
$\land$ is grounded using a t-norm $\tnorm$, 
$\lor$ using a t-conorm $\snorm$, 
$\imp$ using a fuzzy implication $\fzimp$, 
and $\lnot$ using a negation $\fzneg$.
The quantifiers are grounded using aggregators $\Agg^\forall$ and $\Agg^\exists$.

\begin{example}
    \label{ex:tnorm}
    Examples of fuzzy operators are
    the standard negation $\Ns(x)=1-x$, 
    the product t-norm $\Tprod(x,y)=xy$, 
    and its dual t-conorm $\Sprod(x,y)=x+y-xy$.
    $\Ns$ is inspired by the negation of a probability.
    $\Tprod$ is inspired by the intersection probability of two independent events.
    $\Sprod$ is the dual t-conorm derived from the other two operators using De Morgan's laws.
    For brevity, let us denote $\mathbf{x} = (x_1, \dots, x_n)$ as a vector of $n$ values.
    An example of a universal aggregator is $\Atprod(\mathbf{x}) = \prod_{i=1}^{n} x_i$, which is equivalent to the conjunction of $n$ events.
    Notice that all the operators function within the usual interval $[0,1]$.
\end{example}

\label{s:ltn_optimization}

In LTN, the parameters $\btheta$ are learned using \emph{maximal satisfiability} of a knowledgebase $\K$.
The satisfaction of a formula $\phi$ is its evaluation $\Gt(\phi)$, which returns a truth-value in $[0,1]$.
Let $\K$ define a collection of formulas.
The satisfaction of $\K$ is defined as the aggregation of the satisfactions of each $\phi \in \K$.
The result depends on the choice of aggregate operator, denoted by $\Sat$ (typically, the same operator as the universal aggregator).

The optimal set of parameters maximizes the objective function $  \btheta^\ast = \argmax_{\btheta}\ \Sat_{\phi\in\K}\Gt(\phi)$.
The following loss function is used to find that objective via gradient descent:
\begin{equation}
    \mathcal{L}_{\LTN}(\Gt, \K) = - \Sat_{\phi \in \K} \Gt(\phi)
\end{equation}
For more intuition, we give a concrete example in Appendix \ref{appendix:example_sat_ltn}.

\subsection{Appropriate Operators for Gradient Optimization}
\label{s:gradients}
The ability to find an optimum satisfying a formula greatly depends on the choice of operators that ground the logical connectives.
\cite{van_krieken_analyzing_2022} demonstrate that some fuzzy logic operators are unsuitable in a differentiable setting.
For example, the \L{}ukasiewicz t-norm $\Tluk(x,y) = \max(x+y-1,0)$ has vanishing gradients when $x+y-1<0$.

The authors show that the \emph{Product Real Logic} configuration is the most suitable for grounding the logical connectives.
It uses the product t-norm, its dual t-conorm and the standard negation.
For the universal aggregator, it avoids the potential underflow issues of multiplying many small numbers together by working with the log-product $(\log \circ \Atprod)(\mathbf{x}) = \sum_{i=1}^{n} \log(x_i)$.
Because this configuration mixes operators in the usual and logarithm spaces, it has limitations in expressivity and cannot handle certain formulas (e.g. $(\forall u P(u)) \lor (\forall v Q(v))$).
We discuss this further in Appendix \ref{appendix:prodrl_and_pnf}.

In this paper, we aim to explore a configuration that  
1) performs better than Product Real Logic, 
2) can handle any formula,
and 3) is numerically stable.

\section{Introducing \logLTN}
\label{s:logltn}
We present \logLTN, a specification of LTN with end-to-end semantics in the logarithm space.
Section \ref{s:logltn_semantics} introduces operators that can manipulate appropriately log truth degrees.
Section \ref{s:logltn_opti} shows how to modify these operators to perform well in a differentiable setting.

\subsection{Semantics}
\label{s:logltn_semantics}
We employ the product t-norm $\Tprod(x,y) = xy$ and the maximum t-conorm $\Smax(x,y) = \max(x,y)$.
These operators are known to simplify easily in the logarithm space and are commonly used in the log probability literature.
We also use the standard negation operator $\Ns(1-x)$.
Implications are replaced using the material implication rule $(\phi \imp \psi) \equiv (\lnot\phi \lor \psi)$,
 which means we rewrite every implication using $\fzimp(x,y) = \snorm(\fzneg(x), y)$.

The universal aggregator is defined as the conjunction of $n$ events $\Atprod(\mathbf{x}) = \prod_{i=1}^{n} x_i$,
 and the existential aggregator is defined as the disjunction of $n$ events $\Asmax(\mathbf{x}) = \max_{i=1}^n (x_i)$.

\subsubsection{Logarithm space}
We denote $(\log\circ\Gt)(\phi)$ as the log-grounding of a formula and $(\log \circ\ \mathrm{FuzzyOp})$ as the log-grounding of an operator.
Note that maximizing the log grounding of a formula is equivalent to maximizing its grounding as logarithms are monotone increasing functions.
The log-grounding of $\Tprod$, $\Smax$, and their generalizations in aggregators, simplify easily.
\begin{align}
\label{eq:log_semantics_conj}    (\log\circ \Tprod) (x,y) & = \log(x) + \log(y) \\
\label{eq:log_semantics_disj}    (\log\circ \Smax) (x,y) & = \max(\log(x),\log(y))\\
\label{eq:log_semantics_forall} (\log \circ \Atprod)(\mathbf{x}) & = \sum_{i=1}^n \log(x_i)\\
\label{eq:log_semantics_exst}(\log \circ \Asmax)(\mathbf{x}) & = \max_{i=1}^n (\log(x_i))
\end{align}


Expressing $\log\circ\Ns(x)$ as a function with a logarithmic input 
requires the computation of an exponent and a logarithm.
This means that the operator cannot easily take an input in the logarithm space, 
for example, in $\lnot(A \land B)$.
To overcome this, we write formulas in \emph{negative normal form} (NNF).
A formula is in NNF 
when the scope of each negation operator only applies to atoms (predicates), not to complex formulas, 
and when the formula does not contain any implication or equivalence symbols.
For example, if $A$ and $B$ are two atoms, $\lnot A \land \lnot B$ is in NNF but $\lnot(A \lor B)$ is not.


\subsection{Optimizing in \logLTN}
\label{s:logltn_opti}

\subsubsection{Numerical stability of log negations}
\label{s:lognotsigmoid}
Let $f(x)$ be the output of a neural predicate in the interval $[0,1]$ depending on a mathematical variable $x$.
Converting a value to the logarithm space is a risky operation in a computational graph, as both $\log(f(x))$ and $\frac{\partial \log(f(x))}{\partial x}= \frac{1}{f(x)} \frac{\partial f(x)}{\partial x}$ 
can cause overflow errors when $f(x)$ tends to $0$.

In NeSy AI, predicates are typically grounded using a final sigmoid or softmax layer to normalize outputs in $[0,1]$.
Fortunately, the computation and differentiation of the logarithm of a sigmoid or softmax simplifies to a stable expression (refer to Appendix \ref{appendix:logsigmoid}).
For this reason, most frameworks for automatic differentiation, such as TensorFlow or PyTorch, 
offer built-in and all-in-one layer implementations of the log sigmoid and log softmax functions.
These should be used when log-grounding a predicate to avoid unstable gradients.

However, the same issue arises when log-grounding the negation of a predicate, $\log(1-f(x))$, 
Fortunately, we show how to reformulate the log-negation of a sigmoid or softmax predicate to numerically stable expressions.

\begin{restatable}{theorem}{lognotsigmoid}
    \label{thm:log_not_sigmoid}
    The log-negation of a sigmoid function $S(x)=\frac{1}{1+e^{-x}}$, $x \in \R$, simplifies as 
    \begin{equation}
        \label{eq:log_not_sigmoid}
        (\log\circ\Ns)(S(x)) = \log(S(x)) - x
    \end{equation}
\end{restatable}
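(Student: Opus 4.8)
The plan is to reduce the claimed identity to a one-line algebraic manipulation of the logistic sigmoid. First I would unfold the left-hand side using the definition of the standard negation $\Ns(y)=1-y$, so that $(\log\circ\Ns)(S(x)) = \log\bigl(1 - S(x)\bigr)$. The key observation is the reflection identity $1 - S(x) = S(-x)$, which follows immediately by writing $1 - \frac{1}{1+e^{-x}} = \frac{e^{-x}}{1+e^{-x}}$ and multiplying numerator and denominator by $e^{x}$ to obtain $\frac{1}{1+e^{x}} = S(-x)$.

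Next I would evaluate $\log S$ at both $x$ and $-x$ in closed form: $\log S(x) = -\log(1+e^{-x})$ and $\log S(-x) = -\log(1+e^{x})$. The remaining step is to factor $e^{x}$ out of $1+e^{x}$ inside the logarithm of the second expression, $\log(1+e^{x}) = \log\bigl(e^{x}(1+e^{-x})\bigr) = x + \log(1+e^{-x})$. Substituting back yields $\log S(-x) = -x - \log(1+e^{-x}) = \log S(x) - x$, and combining this with $\log\bigl(1-S(x)\bigr) = \log S(-x)$ from the previous paragraph gives exactly Equation~\eqref{eq:log_not_sigmoid}.

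There is essentially no obstacle in the argument; the only point requiring a word of care is finiteness. Since $S(x)\in(0,1)$ for every $x\in\R$, each logarithm above is taken of a strictly positive quantity, so all the manipulations are valid on the whole real line, and the resulting expression $\log S(x) - x$ is the numerically benign form — a stable log-sigmoid term minus a linear term — that motivates the reformulation in Section~\ref{s:lognotsigmoid}.
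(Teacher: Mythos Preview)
Your proof is correct and amounts to the same elementary algebraic verification as the paper's. The only cosmetic difference is that the paper factors $\log(1-S(x)) = \log S(x) + \log\frac{1-S(x)}{S(x)}$ and simplifies the ratio directly to $e^{-x}$, whereas you route through the reflection identity $1-S(x)=S(-x)$ before relating $\log S(-x)$ to $\log S(x)$; both reach $\log S(x)-x$ in one line.
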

\begin{proof}
    Proof in Appendix \ref{appendix:proof_log_not_sigmoid}.
\end{proof}

\begin{restatable}{theorem}{lognotsoftmax}
    \label{thm:log_not_softmax}
    The log-negation of a softmax function $\softmax(\mathbf{z})_i = \frac{e^{z_i}}{\sum_{j=1}^{K} e^{z_j}}$,
    where $\mathbf{z} =(z_{1},\dotsc ,z_{K})\in \mathbb {R} ^{K}$ is a vector of $K$ real values,
    and $i=1,\dots,K$, simplifies as
    \begin{equation}
        \label{eq:log_not_softmax}
        (\log \circ \Ns) (\softmax(\mathbf{z})_i) = \log(\softmax(\mathbf{z})_i) + \log (\sum_{\substack{j=1\\j\neq i}}^K e^{z_j}) - z_i
    \end{equation}
\end{restatable}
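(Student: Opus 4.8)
The plan is to proceed by direct algebraic manipulation, mirroring the proof of Theorem~\ref{thm:log_not_sigmoid} but carrying the extra bookkeeping that the $K$-way normalization introduces. First I would expand the standard negation explicitly: since $\Ns(x) = 1-x$ and $\softmax(\mathbf{z})_i = e^{z_i}/\sum_{j=1}^{K} e^{z_j}$, putting the difference over the common denominator gives
\begin{equation}
    \Ns(\softmax(\mathbf{z})_i) = \frac{\sum_{j=1}^{K} e^{z_j} - e^{z_i}}{\sum_{j=1}^{K} e^{z_j}} = \frac{\sum_{\substack{j=1\\j\neq i}}^{K} e^{z_j}}{\sum_{j=1}^{K} e^{z_j}},
\end{equation}
the only observation being that the $j=i$ term cancels in the numerator. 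Taking logarithms turns this ratio into a difference of two log-sum-of-exponentials, $\log\sum_{j\neq i} e^{z_j} - \log\sum_{j=1}^{K} e^{z_j}$.

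The second step, which is the crux, is to eliminate the full partition function in favour of $\log(\softmax(\mathbf{z})_i)$ using the elementary identity $\log(\softmax(\mathbf{z})_i) = z_i - \log\sum_{j=1}^{K} e^{z_j}$, i.e. $\log\sum_{j=1}^{K} e^{z_j} = z_i - \log(\softmax(\mathbf{z})_i)$. Substituting this into the difference obtained above collapses everything directly to Equation~\eqref{eq:log_not_softmax}. This rewriting is what makes the result useful in practice: it lets the implementation reuse the numerically stable built-in log-softmax layer discussed in Section~\ref{s:lognotsigmoid} instead of recomputing a second normalizer, and it leaves the only genuinely new quantity, $\log\sum_{j\neq i} e^{z_j}$, in log-sum-exp form so that it too can be evaluated stably.

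I do not expect a real obstacle, since the statement is an identity rather than an inequality or a limit; the "hard part" is purely expository. I would be careful to handle the $j\neq i$ restriction consistently throughout and to note the implicit domain assumption $K \ge 2$, so that $\sum_{j\neq i} e^{z_j} > 0$ and the logarithm is well defined (for $K=1$ one has $\softmax(\mathbf{z})_i \equiv 1$ and its negation is $0$, where the log-expression is vacuous). As a sanity check I would observe that setting $K=2$ and $z_2 = 0$ recovers $\softmax(\mathbf{z})_1 = S(z_1)$ and reduces Equation~\eqref{eq:log_not_softmax} to $\log(S(z_1)) - z_1$, consistent with Theorem~\ref{thm:log_not_sigmoid}.
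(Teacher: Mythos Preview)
Your proposal is correct and follows essentially the same direct algebraic route as the paper's proof: both compute $1-\softmax(\mathbf{z})_i$ as a ratio of exponential sums and then rearrange to exhibit $\log(\softmax(\mathbf{z})_i)$ and $-z_i$; the paper does this by first writing $\log(1-p)=\log(p)+\log\frac{1-p}{p}$ and simplifying the second term, while you first simplify $1-p$ and then substitute $\log\sum_j e^{z_j}=z_i-\log(\softmax(\mathbf{z})_i)$, which amounts to the same computation in a slightly different order. Your remarks on the $K\ge 2$ requirement and the $K=2$ sanity check against Theorem~\ref{thm:log_not_sigmoid} are valid additions.
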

\begin{proof}
    Proof in Appendix \ref{appendix:proof_log_not_softmax}.
\end{proof}

These two proposed reformulations have numerically stable implementations.
The first uses the logarithm of a sigmoid and a linear term.
The second uses the logarithm of a softmax function, a linear term, and a logarithm of a sum of exponentials, also known as LogSumExp.
LogSumExp also has a stable implementation and its derivative is a softmax function.

Below, we briefly show the stability advantage of our reformulation for sigmoid by comparing its output with a naive definition $\log(1-\sigmoid(x))$.
The results are obtained in TensorFlow with float32 precision.
The same can be reproduced with the softmax reformulation. 

\begin{verbatim}
Input :
    x : [0., 10., 100., 1000., 10000.]
Output :
    f1(x)=log(1-S(x)) : [-0.69, -1.0e+1, -inf, -inf, -inf]
    df1/dx(x)         : [-0.5, -1.0, nan, nan, nan]
    f2(x)=log(S(x))-x : [-0.69, -1.0e+1, -1.0e+2, -1.0e+3, -1.0e+4]
    df2/dx(x)         : [-0.5, -1.0, -1.0, -1.0, -1.0]
\end{verbatim}


\subsubsection{Relaxation of the disjunctions}
\label{s:logsumexp}
The maximum operator in equations \eqref{eq:log_semantics_disj} and \eqref{eq:log_semantics_exst} is unsuitable in a differentiable setting as it has single-passing gradients.
This means that it only propagates gradients to one input at a time, the one with the highest value.
Intuitively, let the formula $\exists x \ P(x)$ be a constraint used to optimize a neural predictor $P$.
If several individuals in the batch $x$ tend to verify $P(x)$, $\max$ will have non-zero gradients for only one of them.
This can be inefficient in practice as it will push the predictor to overfit that single individual in $x$ and ignore the others. Also, it is particularly sensitive to initial conditions.

A common solution is to use a smooth approximation of the maximum operator.
A popular candidate in the logarithm space is the LogSumExp ($\LSE$) operator, defined as:
\begin{align}
    \LSE(\mathbf{x} \mid \alpha, C) = \frac{1}{\alpha} \left( C + \log(\sum_{i=1}^n e^{\alpha x_i - C})\right)\\
\end{align}
$C=\max(\alpha \mathbf{x})$ is a constant that does not change the result of the expression but prevents overflow errors in the exponential terms.
$\alpha$ is a hyperparameter that scales the bounds of $\LSE$ according to the following inequality:
\begin{equation}
    \label{eq:bounds_LSE_traditional}
    \max(\mathbf{x}) \leq \LSE(\mathbf{x} \mid \alpha, C) \leq \max(\mathbf{x}) + \frac{\log(n)}{\alpha}
\end{equation}
Here, we identify an issue in that LogSumExp approaches the maximum value via a higher bound.
This is problematic, as truth degrees are bound to the interval $[0,1]$, 
and log truth degrees should be bound in the interval $\left[-\infty,0\right]$.
\footnote{The edge case $\log(0) = -\infty$ can be avoided by add a small real value $\epsilon>0$ to zero truth degrees. 
However, this is rarely a problem in practice as sigmoid and softmax layers output values in $]0,1[$.}
With $\LSE$, the output of a log-disjunction can exceed these bounds and become non-negative.

To address this issue, we propose the use of a LogMeanExp operator $\LME$:
\begin{equation}
    \label{eq:LME}
    \LME(\mathbf{x} \mid \alpha, C) = \frac{1}{\alpha} \left( C + \log(\frac{\sum_{i=1}^n e^{\alpha x_i-C}}{n} ) \right)
\end{equation}
It approaches the maximum operator from below values (proof in Appendix \ref{appendix:bounds_LSE_lower}):
\begin{equation}
    \label{eq:bounds_LSE_lower}
    \max(\mathbf{x}) - \frac{\log(n)}{\alpha} \leq \LME(\mathbf{x} \mid \alpha, C) \leq \max(\mathbf{x})
\end{equation}

This operator is numerically stable, well-bounded, and suitable for derivation.
We use it to ground disjunctions and existential quantifications in \logLTN. 
For best practice, the parameter $\alpha$ that scales the smooth maximum should be scheduled over time to balance exploration and exploitation~\cite{badreddine_logic_2022}.

\subsubsection{Batch-size invariance for the universal aggregation}
\label{s:batch_aggregation}
We have improved the derivability 
of negations, disjunctions, and existential quantifiers in the logarithm space.
Here, we identify an issue with the universal quantification.

Consider a knowledge base with two rules $\phi_1 = \forall u P(u)$ and $\phi_2 = \exists v Q(v)$.
Let $\G(u) = [x_1, \dots, x_m]$ and $\G(v) = [y_1, \dots, y_n]$ be two batches of individuals.
Let us develop the groundings of the rules:
\begin{align}
\label{eq:batch_inv_forall}     (\log\circ\Gt)(\phi_1) &
         = \sum_{i=1}^m (\log\circ\Gt)(P)(x_i) \\
\label{eq:batch_inv_exists}     (\log\circ\Gt)(\phi_2) &
         = \max_{i=1}^n (\log\circ\Gt)(Q)(y_i) 
\end{align}

In the loss $\mathcal{L} = - (\log\circ\Gt)(\phi_1 \land \phi_2) = - (\log\circ\Gt)(\phi_1) - (\log\circ\Gt)(\phi_2)$,
the optimization will tend to overfit the rule with the universal quantifier and ignore the existential rule for large batch sizes.
This is due to the fact that the first sums $m$ log truth degrees, whereas the second only takes one log truth degree as a maximum.
In terms of differentiability,
the universal rule weights more on the gradient updates, 
as $\sum_{i=1}^m \frac{\partial \mathcal{L}}{\partial \log\circ\G(P)(x_i)} = \sum_{i=1}^m -1 = -m$,
whereas $\sum_{i=1}^n\partialderivative{\mathcal{L}}{\log\circ\G(Q)(y_i)} = \sum_{i=1}^n - \delta_{ij} = -1$ given $j=\argmax_{j=1}^n \G(Q)(y_j)$.
The problem remains with the smooth maximum LogMeanExp, whose gradients are a softmax function summing to 1 as well.

This weighing problem can also arise when comparing two universal quantifiers, such as $\forall u P(u)$ and $\forall v Q(v)$. 
If the batches for $u$ and $v$ have varying sizes, the optimization algorithm will tend to overfit the rule that has more examples of individuals and ignore the other.

To solve this problem, we propose to use a mean instead of a sum as a weighting scheme to balance universal quantifiers:
\begin{equation}
\label{eq:forall_invariant}
    (\log\circ\G)(\forall u P(u)) = \sum_{i=1}^m \frac{(\log\circ\G)(P)(x_i)}{m}
\end{equation}
By averaging log truth degrees instead of summing them, we obtain a \emph{batch-size invariant} aggregator.
The weight of the gradients becomes $\sum_{i=1}^m \frac{\partial \mathcal{L}}{\partial \log\circ\G(P)(x_i)} = -1$.
This ensures every formula weighs equally in the loss function.

In the normal space, the universal quantifiers then correspond to geometric means instead of products.
This trick alters the objective and search space of the task, but we find that it is crucial to approach good solutions in our experiments.

\subsubsection{Summary}
By implementing all the aforementioned modifications, 
we arrive at the log-grounding routine presented in Algorithm \ref{alg:log_grounding_derivable}.

\begin{algorithm}
\caption{Compute $(\log\circ\Gt)(\phi)$ for derivability}
\label{alg:log_grounding_derivable}
\begin{algorithmic}[1]
    \State {\bf Step 1} \ {\it Rewrite $\phi$ in negative normal form}
    \State {\bf Step 2} \ {\it Log-ground predicates and their negations}
        \State \indent Use the log-negation simplifications with sigmoid/softmax layers (Equations \ref{eq:log_not_sigmoid} and \ref{eq:log_not_softmax})
    \State {\bf Step 3} \ {\it Compute connectives in the logarithm space}
        \State \indent $\land$ becomes $+$
        \State \indent $\lor$ becomes $\LME$
        \State \indent $\forall$ becomes $\mean$
        \State \indent $\exists$ becomes $\LME$        
\end{algorithmic}
\end{algorithm}

\section{Limitations}
\label{s:comparisons}
\subsection{De Morgan's Laws and NNF}
\label{s:NNF}
The Negative Normal Form (NNF) does not preserve equivalence with the logarithmic semantics introduced in Section \ref{s:logltn_semantics}.
To transform a formula in NNF, one must push the negations in front of atoms using De Morgan's laws, but the laws do not hold given that $\Tprod(x,y)=xy$ and $\Smax(x,y) = \max(x,y)$ are not fuzzy dual operators.
However, we can prove the following De Morgan's inequalities:

\begin{restatable}{theorem}{logdemorgans}
    \label{thm:log_demorgans}
    Let $P$ and $Q$ be two formulas. We can show that
    \begin{align}
        \G(\lnot(P \land Q)) & \geq \G(\lnot P \lor \lnot Q) \\
        \G(\lnot(P \lor Q)) & \geq \G(\lnot P \land \lnot Q) \\
        \G(\lnot(\forall u P(u))) & \geq \G(\exists u \lnot P(u)) \\
        \G(\lnot(\exists u P(u))) & \geq \G(\forall u \lnot P(u))
    \end{align}
\end{restatable}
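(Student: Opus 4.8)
The plan is to reduce every inequality to a single elementary fact about numbers in $[0,1]$ by unfolding the compositional definition of $\G$. Write $a = \G(P)$ and $b = \G(Q)$, both in $[0,1]$, and for the quantified statements write $a_i = \G(P)$ evaluated at the $i$-th individual in the batch grounding of $u$, for $i = 1,\dots,m$. The one observation doing all the work is a short lemma: for any finite family $(c_i)_{i=1}^m$ of values in $[0,1]$, $\prod_{i=1}^m c_i \leq \min_{i=1}^m c_i$, since each factor lies in $[0,1]$ and so multiplying by it cannot increase a nonnegative quantity (in particular $ab \leq \min(a,b)$ for $a,b\in[0,1]$). I would state and prove this first, then apply it four times.

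Then I would verify the four inequalities in turn, exploiting that they form two dual pairs. For the first, $\G(\lnot(P\land Q)) = \Ns(\Tprod(a,b)) = 1-ab$ while $\G(\lnot P\lor\lnot Q) = \Smax(\Ns a,\Ns b) = \max(1-a,1-b) = 1-\min(a,b)$, so the claim is exactly $ab\leq\min(a,b)$, an instance of the lemma. The second follows by substituting $1-a,1-b$ for $a,b$: $\G(\lnot(P\lor Q)) = 1-\max(a,b) = \min(1-a,1-b)$ and $\G(\lnot P\land\lnot Q) = (1-a)(1-b)$, so it reduces to $(1-a)(1-b)\leq\min(1-a,1-b)$, again the lemma. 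The third and fourth are the batch versions of the same two computations: $\G(\lnot\forall u P(u)) = 1-\prod_{i=1}^m a_i$ versus $\G(\exists u\lnot P(u)) = \max_{i=1}^m(1-a_i) = 1-\min_{i=1}^m a_i$, reducing to $\prod_i a_i \leq \min_i a_i$; and $\G(\lnot\exists u P(u)) = 1-\max_{i=1}^m a_i = \min_{i=1}^m(1-a_i)$ versus $\G(\forall u\lnot P(u)) = \prod_{i=1}^m(1-a_i)$, reducing to $\prod_i(1-a_i)\leq\min_i(1-a_i)$. Each is the lemma applied to $(a_i)$ or to $(1-a_i)$.

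There is no genuinely hard step here; the entire content is that $\Tprod$ under-approximates the dual-of-$\Smax$ t-norm (equivalently, $\Smax$ over-approximates the dual-of-$\Tprod$ t-conorm), which is precisely why De Morgan degrades to an inequality rather than an equality. The only points requiring care are bookkeeping: keeping the $[0,1]$ conventions consistent, noting that an empty batch (if admitted) makes both sides trivial, and being explicit that pushing a negation through a connective as in NNF replaces a subformula's value by the listed operator — the step at which the inequality, rather than equality, is introduced. Observing the two negation-substitution dualities, (1)$\leftrightarrow$(2) and (3)$\leftrightarrow$(4), lets the write-up carry out only two of the four cases in full. Finally, the corresponding log-space inequalities follow for free, since $\log$ is monotone increasing.
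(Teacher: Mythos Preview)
Your proposal is correct and follows essentially the same route as the paper: both unfold the groundings, reduce all four inequalities to the single fact $\prod_i c_i \leq \min_i c_i$ for $c_i\in[0,1]$, and exploit the substitution $c_i\mapsto 1-c_i$ to pair off the dual cases. The only cosmetic difference is that the paper observes the binary inequalities are the $n=2$ specializations of the quantifier ones and so proves a single case in full, whereas you state the product-$\leq$-min lemma upfront and apply it four times.
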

\begin{proof}
    Proof in Appendix \ref{appendix:proof_nnfdemorgans}.
\end{proof}
These results stem from the fact that $\max(x,y)$ is a lower-bound to other t-conorms including the dual product t-conorm.
In Appendix \ref{appendix:bound_tightness}, we analyze the tightness of these bounds.
Given that NNF is obtained by repeatedly applying De Morgan's laws, we can infer the following property:
\begin{corollary}
    Let $\phi$ be any formula and $\phi'$ be a NNF formula derived syntactically from $\phi$ using De Morgan's laws.
    Then, $\Gt(\phi') \leq \Gt(\phi)$, meaning that the satisfaction of the NNF formula $\phi'$ is a lower bound for the satisfaction of the formula $\phi$.
\end{corollary}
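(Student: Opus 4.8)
The plan is to derive the corollary from Theorem~\ref{thm:log_demorgans} by a structural induction, but with a strengthened, polarity-aware hypothesis so that the induction goes through negations. Writing $\mathrm{nnf}(\cdot)$ for the canonical transformation that first rewrites every $\phi\imp\psi$ as $\lnot\phi\lor\psi$, then eliminates double negations and pushes the remaining negations through $\land,\lor,\forall,\exists$ with De Morgan's laws, I would prove simultaneously by induction on the structure of $\phi$ the two claims $\Gt(\mathrm{nnf}(\phi))\le\Gt(\phi)$ and $\Gt(\mathrm{nnf}(\lnot\phi))\le\Gt(\lnot\phi)$. Since $\mathrm{nnf}(\phi)$ is in NNF and, as discussed below, has the same grounding as any NNF formula obtained from $\phi$ by De Morgan rewriting, the first inequality is exactly the statement of the corollary.

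The two engines of the induction are coordinatewise monotonicity of the \logLTN{} connectives and Theorem~\ref{thm:log_demorgans}. The base case ($\phi$ an atom) is trivial, since $\mathrm{nnf}$ is then the identity. For the positive step, $\mathrm{nnf}(\phi_1\land\phi_2)=\mathrm{nnf}(\phi_1)\land\mathrm{nnf}(\phi_2)$, and because $\Tprod(a,b)=ab$ is non-decreasing in each argument on $[0,1]$, the induction hypotheses $\Gt(\mathrm{nnf}(\phi_i))\le\Gt(\phi_i)$ give $\Gt(\mathrm{nnf}(\phi_1\land\phi_2))\le\Gt(\phi_1\land\phi_2)$; the cases of $\lor$, $\forall$, $\exists$ are identical using that $\Smax$, $\Atprod$, $\Asmax$ are also coordinatewise non-decreasing (for quantifiers the hypothesis is read pointwise over the batch $\G(u)=[x_1,\dots,x_m]$ before applying monotonicity), and implication is covered because $\Gt(\phi\imp\psi)=\Gt(\lnot\phi\lor\psi)$ holds by the definition of $\fzimp$. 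For the negative step, e.g. $\mathrm{nnf}(\lnot(\phi_1\land\phi_2))=\mathrm{nnf}(\lnot\phi_1)\lor\mathrm{nnf}(\lnot\phi_2)$: monotonicity of $\Smax$ together with $\Gt(\mathrm{nnf}(\lnot\phi_i))\le\Gt(\lnot\phi_i)$ yields $\Gt(\mathrm{nnf}(\lnot(\phi_1\land\phi_2)))\le\Gt(\lnot\phi_1\lor\lnot\phi_2)$, and Theorem~\ref{thm:log_demorgans} bounds the right-hand side by $\Gt(\lnot(\phi_1\land\phi_2))$; the $\lnot\lor$, $\lnot\forall$, $\lnot\exists$ cases are symmetric. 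Double negation uses $\mathrm{nnf}(\lnot\lnot\phi)=\mathrm{nnf}(\phi)$ and the involutivity $\Ns(\Ns(x))=x$, so $\Gt(\mathrm{nnf}(\lnot\lnot\phi))=\Gt(\mathrm{nnf}(\phi))\le\Gt(\phi)=\Gt(\lnot\lnot\phi)$, and $\lnot(\phi\imp\psi)$ reduces to earlier cases via the definitional rewrite followed by De Morgan.

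The step that requires the most care — and the main obstacle — is justifying that ``a NNF formula derived syntactically using De Morgan's laws'' is pinned down tightly enough to be bounded by $\Gt(\mathrm{nnf}(\phi))$. A naive induction on the \emph{number} of rewrite applications does not work, because a De Morgan step performed beneath an unresolved negation can \emph{increase} the grounding: rewriting $\lnot(P\land Q)$ inside $\lnot\lnot(P\land Q)$ produces $\lnot(\lnot P\lor\lnot Q)$, whose grounding $\min(\Gt(P),\Gt(Q))$ is at least $\Gt(P)\Gt(Q)=\Gt(\lnot\lnot(P\land Q))$, so the surrounding context fails to be monotone exactly at negative-polarity positions. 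I would resolve this in one of two ways: (i) note that the NNF rewrite system is terminating and locally confluent, hence confluent, so the terminal NNF form reachable from $\phi$ is unique and coincides with $\mathrm{nnf}(\phi)$, which makes the structural bound above sufficient; or (ii) restrict to the standard strategy that eliminates all implications first and thereafter always rewrites an \emph{outermost} negation, under which every step is either grounding-preserving (double-negation and implication elimination) or a genuine De Morgan push whose context contains no negation above it, hence is monotone, so Theorem~\ref{thm:log_demorgans} applies step by step. Either route gives $\Gt(\phi')\le\Gt(\phi)$ for the terminal NNF formula $\phi'$.
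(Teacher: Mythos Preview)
Your proof is correct and considerably more careful than the paper's own treatment. The paper does not give a proof of the corollary at all: it simply asserts, immediately after Theorem~\ref{thm:log_demorgans}, that ``Given that NNF is obtained by repeatedly applying De Morgan's laws, we can infer the following property,'' and then states the corollary. The implicit argument is that each De Morgan rewrite step weakly decreases the grounding, so iterating to NNF does too.

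Your structural, polarity-aware induction is a genuinely different and more rigorous route. Two things you make explicit that the paper leaves tacit: (i) the need for coordinatewise monotonicity of $\Tprod$, $\Smax$, $\Atprod$, $\Asmax$ in order to push the inductive bound through compound connectives, and (ii) the observation that a De Morgan rewrite performed at a negative-polarity position can \emph{increase} the grounding, so the naive ``each step decreases $\Gt$'' argument fails unless one restricts to the outermost strategy or appeals to confluence of the rewrite system. The paper's one-line justification is best read as your option~(ii) --- the standard top-down NNF conversion --- but it never says so. What your approach buys is an honest accounting of where Theorem~\ref{thm:log_demorgans} is actually invoked (only at the moment a negation is pushed through a connective, with no further negations above it) and why the remaining steps are harmless; what the paper's approach buys is brevity, at the cost of leaving the reader to reconstruct exactly the argument you wrote.
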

This is particularly useful as, if we convert a formula into NNF and find a parametric grounding that satisfies it, we know that the original formula is at least as satisfied.


\section{Experiments}
\label{s:experiments}
\def\StableRL{LTN-Stable\ }
\def\ProdRL{LTN-Prod\ }
\def\logLTNsum{\logLTN-sum\ }
\def\logLTNmax{\logLTN-max\ }
\def\logLTNLSE{\logLTN-LSE\ }

\subsection{Task 1: Clustering}
\label{s:task_clustering}
The first experiment is a clustering problem based on the gene expression cancer RNA-Seq benchmark from the UCI ML datasets repository~\cite{UCI_Dua_2019}. 
The dataset has 801 samples of 20531 features, which we reduce to 16 features using PCA. 
The task is to divide samples into five clusters, which roughly correspond with five ground truth cancer types. 
We train a neural predictor $C(x,c)$ that returns the belief of a point $x$ belonging in a given cluster $c$ using these three constraints:
\begin{gather}
\label{eq:clustering1}    \forall x \exists c\ C(x,c)\\
\label{eq:clustering2}    \forall c \exists x\ C(x,c)\\
\label{eq:clustering3}    \forall (c,x,y: \lvert x-y \rvert < \mathrm{th}_{2.5})\ C(x,c) \imp C(y,c) 
\end{gather} 
$x$ and $y$ are grounded with the batch of 801 points.
$c$ is a variable that ranges over five cluster ids.
$C(x,c)$ outputs beliefs using a softmax output layer that ensures mutual exclusivity of clusters.
\eqref{eq:clustering3} uses the concept of "guarded quantification" introduced by \cite{badreddine_logic_2022}.
It means that the quantification only retains the individuals verifying the condition $\lvert x-y \rvert < \mathrm{th}_{2.5}$,
where $\mathrm{th}_{2.5}$ is the $2.5$-th percentile of the euclidean distances between all pairs of points.
Intuitively, the constraint states that for any pair of points that are very close, if one belongs to a cluster, the other must belong in the same cluster.

We use this task, inspired by the toy example from \cite{badreddine_logic_2022} and extended on real-world data, as it is one of the rare NeSy tasks with existential clauses ranging over many individuals (here, 801 individuals). 

\subsection{Task 2: MNISTAdd}
\def\mnistthree{\inlinegraphics{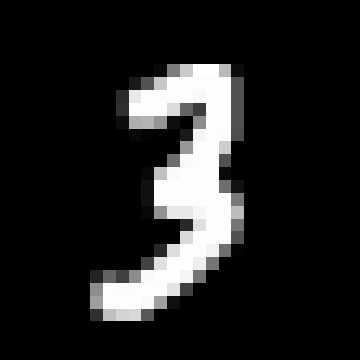}}
\def\mnisteight{\inlinegraphics{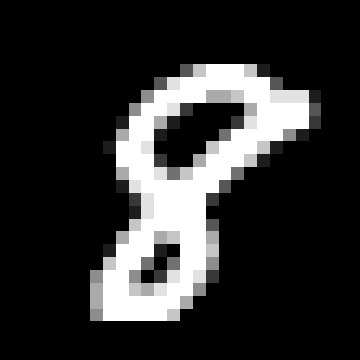}}
\def\mnistnine{\inlinegraphics{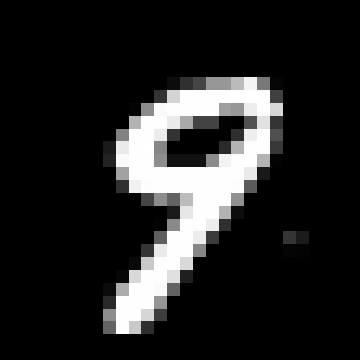}}
\def\mnisttwo{\inlinegraphics{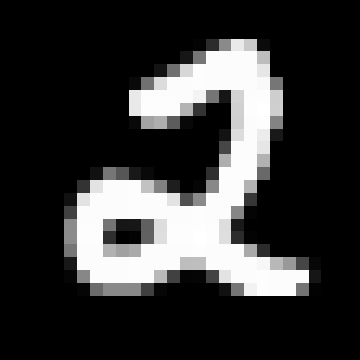}}
\def\isdigit{\mathsf{is\_digit}}

The second problem is the MNISTAdd task from \cite{manhaeve_deepproblog_2018}.
We experiment on the 2-digits number variant of the task.
In this problem, we learn to recognize the sum of two numbers of two digits using only the result of the sum as a training label.
For example, a dataset sample would be $([\mnistthree,\mnisteight], [\mnistnine,\mnisttwo], 130)$.

The task is modeled using a digit classifier $\isdigit(\mnisteight,d)$ which predicts beliefs for the MNIST image being the digit $d=0\dots 9$.
Whereas we only provide labels for the final result of the addition, NeSy methods use prior knowledge about intermediate labels (possible digits used in the addition) to propagate ground truth information to the digit classifier.
Given the MNIST images and sum result $([x_1,x_2],[y_1, y_2],n)$, we use the LTN constraint from \cite{badreddine_logic_2022}:
\begin{align}
& \exists d_1,d_2,d_3,d_4 : 10d_1+d_2+10d_3+d_4=n \\
& \quad (\isdigit(x_1,d_1) \land \isdigit(x_2,d_2) \land \isdigit(y_1,d_3) \land \isdigit(y_2,d_4)) \notag
\end{align}
The loss signal is a universal aggregation of the constraint over minibatches of labeled examples. 
We use the same neural network for $\isdigit$ as \cite{manhaeve_deepproblog_2018}. 
This is a basic experiment with a single training constraint. 
We assess it because many probabilistic NeSy methods use it as a standard for comparison.

\subsection{Task 3: Semantic PASCAL-Part}
\label{s:task_pascalpart}
\newcommand{\isbottle}{\mathsf{bottle}}
\newcommand{\iscap}{\mathsf{cap}}
\newcommand{\istype}{\mathsf{is}}
\newcommand{\bottle}{\mathsf{bottle}}
\newcommand{\captype}{\mathsf{cap}}
\newcommand{\body}{\mathsf{body}}
\newcommand{\ispartof}{\mathsf{partOf}}
\newcommand{\isbody}{\mathsf{body}}

The third experiment is a semi-supervised semantic image interpretation task on the semantic PASCAL-Part dataset from \cite{donadello_ltnimage_2017}.
The goal is to train a type classifier $\istype(x,\bottle)$, $\istype(x,\captype)$, etc., that predicts the type of an object within a bounding box $x$,
 and to train a relation predictor $\ispartof(x,y)$ that determines if one bounding box $x$ is part of another bounding box $y$.
 An example of such bounding boxes is presented in Figure \ref{fig:pascal_example}.
 
 \begin{figure}
    \centering
    \includegraphics[width=0.4\linewidth]{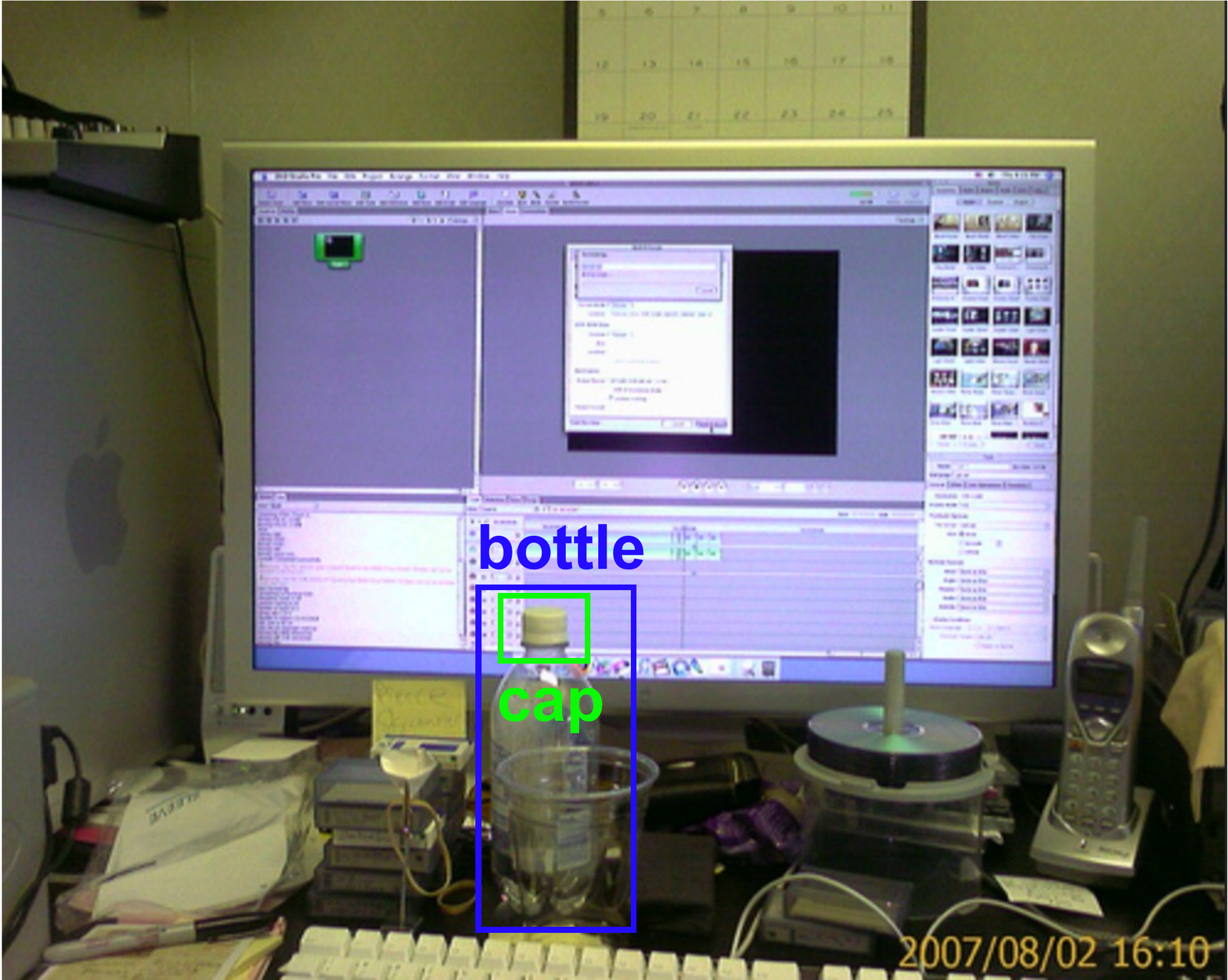}
    \caption{Example of bounding boxes from PASCAL-Part.}
    \label{fig:pascal_example}
\end{figure}

Training is guided by three constraints based on ground truth examples,
one for labeled type examples, one for pairs of positive examples for $\ispartof$, and one for pairs of negative examples. 
The ground truth labels are made available for only $5\%$ of the training data.
However, training is carried on the unlabeled data using mereological constraints that relate to the types and their meanings, for example:
\begin{gather}
    \forall x,y \ \istype(x,\bottle) \land \ispartof(y,x) \imp (\istype(y,\captype) \lor \istype(x,\body)) \label{eq:constraint_pascalpart} \\
    \forall x,y \ \istype(x,\captype) \land \ispartof(x,y) \imp \istype(y,\bottle)
\end{gather}

\cite{donadello_ltnimage_2017} grounded the bounding boxes using predictions produced by an object detector trained on PASCAL-Part.
This means that LTN was only used to correct the predictions of the detector.
We increase the difficulty of the task by implementing the bounding boxes with a latent vector of 1024 features output by a pre-trained FasterRCNN backbone.
That is, LTN has to learn all the final layers of the object detector and its specialization on PASCAL-Part.
We release our version of the dataset on \url{https://github.com/sbadredd/semantic-pascal-part}.
More details on the experiment are available in Appendix \ref{appendix:pascalpart}.

Out of the three tasks, this is by far the largest with a total 59 object types and 60 corresponding constraints.
It showcases the power of LTN and its capability to simply integrate many constraints in a loss function. 
We evaluate the type classification using balanced accuracy, $\ispartof(x,y)$ using the area under precision-recall curves, 
and the semantic interpretation by reporting the number of false positives that violate the mereological constraints -- for example, a bottle is predicted to be part of a cap.


\subsection{Baselines}
We compare \logLTN\ with the following baselines. 
Note that we skip the ablation of the log-negation simplifications, as their numerical practicability is already illustrated in Section \ref{s:lognotsigmoid}.
\begin{description}
    \item[\ProdRL] Product Real Logic was identified by \cite{van_krieken_analyzing_2022} as the best performing operator semantics for differentiable fuzzy logics.
    It uses the product t-norm $\Tprod(x,y)=xy$ and its dual t-conorm $\Sprod(x,y)=x+y-xy$.
    The universal quantifier uses the log-product aggregator $(\log \circ \Atprod)(\mathbf{x}) = \sum_{i=1}^{n} \log(x_i)$
    and the existential quantifier uses a smooth maximum.
    Its combination of operators both in the usual and logarithm space makes it difficult to handle certain formulas. 
    We discuss this issue further in appendix \ref{appendix:prodrl_and_pnf}.
    \item[\StableRL] Stable Product Real Logic~\cite{badreddine_logic_2022} is a modification of \ProdRL that uses a smooth minimum for the universal aggregator,
    such that all operators perform in the usual space.
    A limitation of the smooth minimum is that it depends on a smoothing hyperparameter $p$, which we show to greatly influence the results.
    \item[\logLTN] the configuration introduced in this paper, performing fully in the logarithm space.
    \item[\logLTNsum] an ablation of logLTN using a sum instead of a batch-size invariant mean for universal aggregations. See Section \ref{s:batch_aggregation}
    \item[\logLTNmax]  an ablation logLTN that uses a non-relaxed maximum operator for existential aggregations. See Section \ref{s:logsumexp}.
    \item[\logLTNLSE] an ablation of logLTN that uses a traditional LogSumExp operator for existential aggregations instead of LogMeanExp. See Section \ref{s:logsumexp}.
\end{description}

\section{Results}
The code for our experiments is available at \url{https://github.com/sbadredd/logltn-experiments}.
We perform experiments with runs of 1000 training steps for the clustering problem, runs of 20 epochs for MNISTAdd with two different dataset sizes, and runs of 1000 training steps for Semantic PASCAL-Part.
The results are summarized in Tables \ref{tab:results_cluster_mnistadd} and \ref{tab:results_pascalpart}. 
Because Semantic PASCAL-Part is more computationally demanding, we conducted ablation studies only on the two first experiments.
For additional implementation details, please refer to Appendix \ref{appendix:implementation} covering training configuration and baseline hyperparameters.
Across all metrics, \logLTN\ consistently achieved the best or second best performance.

\ProdRL performs poorly in the clustering task.
In a qualitative analysis of the cluster assignments (Figure \ref{fig:results_clustering}), we observe that \ProdRL disregards constraint \eqref{eq:clustering2} stating that each cluster contains at least one point.
This is due to the batch-variant log-product. 
As that constraint is quantified over five clusters ($\forall c$), it has relatively less weight compared to the other ones aggregated over all points.
Also, in Semantic PASCAL-Part, while \ProdRL exhibits good results, \logLTN\ still demonstrates superior performance by avoiding on average 34\% more mereological violations and having less deviation across all metrics than \ProdRL.

We tested \StableRL with the smooth minimum parameter $p=2$ and $p=6$.
When $p=2$, the smooth minimum is less strict and corresponds to a Mean-Squared Error (MSE) aggregator. 
This leads to low accuracy in the PASCAL-Part problem, as the constraint aggregator focuses on satisfying the 57 logical constraints rather than the three ground truth constraints which it treats as "outliers".
The result is a predictor classifying all objects into barely constrained types (e.g. background) and all $\ispartof(x,y)$ as false negatives in order to reach low mereological violations.
With $p=6$, the aggregator is more strict but can overfit outliers and exhibit instability in other experiments.
Despite its name, we find \StableRL to be too dependent on the hyperparameter $p$ and unstable.

Regarding the ablations, \logLTNmax generally performed poorly due to inadequate gradient propagation. 
\logLTNLSE showed similar performance to \logLTN\ overall, except for deviating results in MNISTAdd, possibly due to the unbounded maximum breaking at an edge case. 
Also, even on well-performing problems, we find that grounding the knowledgebase with $\logLTNLSE$ can lead to log truth degrees reaching values as high as $(\log\circ\G)(\K)=5$.
That corresponds to a truth degree of approximately $(\log\circ\G)(\K)=150$. 
Since fuzzy truth degrees should be within the range of $[0,1]$, the higher values generated by \logLTNLSE make it unusable in many cases, making \logLTN\ a more suitable option.

Finally, \logLTNsum exhibits behavior akin to \ProdRL in clustering due to its batch-variant aggregator.
In MNISTAdd, \logLTNsum outperforms \logLTN, but the only difference between the two baselines is a constant factor in the loss function due to taking a mean over the minibatch of samples instead of a sum.
We assume that scaling the learning rate accordingly would yield comparable results with \logLTN.

In Table \ref{tab:reported_results_mnistadd}, we compare our MNISTAdd results with those reported by popular probabilistic frameworks~\cite{manhaeve_deepproblog_2018,winters_deepstochlog_2022,pryor2022neupsl}. 
In their study, \cite{badreddine_logic_2022} showed that LTN managed to solve the MNISTAdd problem, but the outcomes varied significantly due to instability during initialization. 
We show that by training with \logLTN\ in the logarithm space, we resolved this issue and achieved standard state-of-the-art performance in the task.

\begin{figure}
    \centering
    \includegraphics[width=\textwidth]{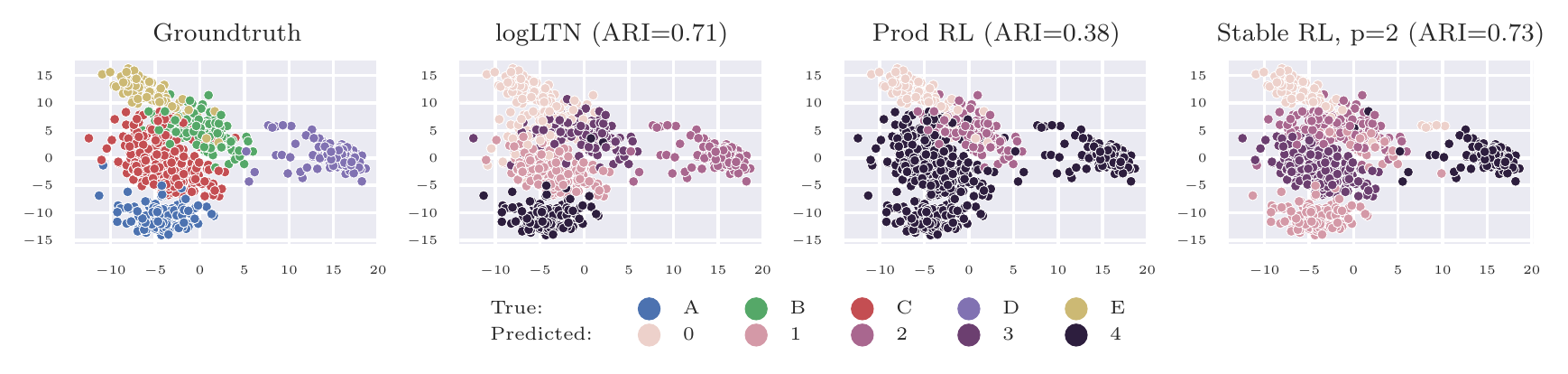}
    \caption{2D PCA plots of typical cluster assignments by each baseline}
    \label{fig:results_clustering}
\end{figure}

\begin{table}
    \centering
    \begin{tabular}{l|c|cc}
        \toprule
        & {\it Clustering} & \multicolumn{2}{c}{{\it MNISTAdd}} \\         
         &  &  1,500 samples & 15,000 samples \\
        \midrule
        \ProdRL & $0.37 \pm 0.18$ & $ 88.39 \pm 0.97 ^*$ & $ 95.39 \pm 0.33 $ \\
        \StableRL (p=2) & $0.71 \pm 0.10$ & $ 62.77 \pm 37.16 $ & $ 80.34 \pm 32.82 $ \\
        \StableRL (p=6) & $0.62 \pm 0.05$ & $ 30.51 \pm 34.19 $ & $ 95.12 \pm 0.62 $ \\
        \logLTN & $0.71 \pm 0.06 ^*$ & $ 88.29 \pm 0.78 $ & $ 95.61 \pm 0.51 ^* $ \\
        \logLTNLSE & $ {\bf 0.72\pm 0.05 } $  & $ 75.35 \pm 29.96 $ & $ 95.27 \pm 0.53 $ \\
        \logLTNmax & $0.35\pm 0.17$ & $ 48.39 \pm 36.80 $ & $ 80.81 \pm 32.64 $ \\
        \logLTNsum & $0.36\pm 0.22$ & $ {\bf 88.49 \pm 0.88}  $ & $ {\bf 95.62 \pm 0.42}  $ \\
        \bottomrule
    \end{tabular}
    \caption{Results on Clustering (Adjusted Rand Index scores -- averaged on 10 runs) and MNISTAdd (test accuracy -- averaged on 5 runs).}
    \label{tab:results_cluster_mnistadd}
\end{table}

\begin{table}
    \centering
    \begin{tabular}{lcc}
        \toprule
         & 1,500 samples & 15,000 samples \\
        \midrule
        DeepProbLog & $ 87.21 \pm 1.92^* $ & $ 95.16 \pm 1.70 $ \\
        DeepStochLog & NA & $ {\bf 96.40 \pm 0.10} $ \\
        NeuPSL & $ 87.05 \pm 1.48 $ & $ 93.91 \pm 0.37 $ \\
        \logLTN & $ {\bf 88.29 \pm 0.78} $ & $ 95.61 \pm 0.51^* $ \\
        \bottomrule
    \end{tabular}
    \caption{
            Reported test accuracy on MNISTAdd by probabilistic baselines. 
            DeepStochLog does not report results on the training set size 1,500.}
    \label{tab:reported_results_mnistadd}
\end{table}

\begin{table}
    \centering
    \begin{tabular}{lccc}
        \toprule
         & PartOf AUC & Type Accuracy & \# Mereological Violations \\
        \midrule
        \ProdRL & ${\bf 71.38 \pm 10.38} $ & $53.40 \pm 1.81^*$ & $16,312.8 \pm 4,180.2 $ \\
        \StableRL (p=2) & $18.62 \pm 28.16$ & $1.90 \pm 0.46$ & ${\bf 1,196.0 \pm 2,674.3}$ \\
        \StableRL (p=6) & $64.85 \pm 3.05$ & $34.53 \pm 1.62$ & $24,324.5 \pm 14,052.3$ \\
        \logLTN & $69.57 \pm 4.33 ^*$ & ${\bf 55.54 \pm 1.00}$ & $10,708.2 \pm 1,455.9 ^*$ \\
        \bottomrule
    \end{tabular}
    \caption{Test results on Semantic PASCAL-Part averaged on 5 runs.}
    \label{tab:results_pascalpart}
\end{table}

\section{Related Work}
The field of combining logic and neural networks in NeSy integrations is gaining interest, as outlined by \cite{garcez_neurosymbolic_2020}. For an overview of the approaches and challenges, see \cite{hitzler_nesysota_2022}. To understand the prevalence of these systems, refer to \cite{mdsarker_nesy_insights_2021}.

A family of approaches converts logical connectives into differentiable operations using fuzzy semantics. 
Systems that employ this approach include 
LTN \cite{serafini_ltn_2016,badreddine_logic_2022}, 
KALE \cite{guo_kale_2016}, 
SBR \cite{diligenti_integrating_2017},
and LRNN \cite{sourek_lifted_2018}
among others.
Unlike probabilistic logics~\cite{manhaeve_deepproblog_2018,winters_deepstochlog_2022}, fuzzy approaches change logic semantics and are less common in proof reasoning.
Nevertheless, fuzzy frameworks excel in knowledge-aided learning and offer simplicity compared to probabilistic methods which must often solve the exponentially complex model counting problem.

Fuzzy frameworks have been used in a wide range of applications and fields in recent years.
These include but are not limited to semantic image interpretation \cite{donadello_ltnimage_2017}, 
natural language processing \cite{bianchi_complementing_2019},
reinforcement learning \cite{injecting_prior_RL_badreddine_2019}, 
query answering over knowledge graphs \cite{arakelyan_cqa_2021,fuzzqe},
or open-world reasoning \cite{wagner_reasoning_2022}.
Our paper aligns with the research stream of \cite{van_krieken_analyzing_2022} as it strives to improve the performance of all these related works by providing mathematical and computational cues for fuzzy semantics.

\section{Conclusions}
Many NeSy approaches rely on fuzzy operator semantics to ground knowledge in loss functions. 
However, it is clear that not all semantics are suitable for gradient descent optimization algorithms. 
In this paper, we propose a set of semantics that can be used to train logic end-to-end in the logarithm space. 
We demonstrate that the proposed configuration outperforms semantics previously considered state-of-the-art in such NeSy systems.

We propose the solution, which we refer to as logLTN, as an additional set of semantics for LTN and implement it in the repository of the framework. 
Each of our findings can also be applied separately to any framework that works with logic in the logarithm space. 
In summary, our recommendations for such systems include computing log-negations using Equations \eqref{eq:log_not_sigmoid} and \eqref{eq:log_not_softmax}, relaxing disjunctions using Equation \eqref{eq:LME}, and making universal quantifications batch size-invariant using Equation \eqref{eq:forall_invariant}. 
Our research is expected to improve the performance of all NeSy approaches that rely on fuzzy operator semantics.

\bibliographystyle{authordate1} 
\bibliography{main}

\newpage

\appendix
\section{Background}

\subsection{LTN Example}
\label{appendix:example_sat_ltn}
Let us denote the predicate $\Friend$ as $\shortFriend$ for brevity.
In the expression $\shortFriend(\shortalice,\shortbob)$, let $\Gt(\shortalice)$ and $\Gt(\shortbob)$ be vector embeddings in $\R^m$.
A primitive approximation of the friendship relationship could be a cosine similarity function $\Gt(\shortFriend) : (\bx,\by)\mapsto \frac{\bx \cdot \by}{||\bx|| ||\by||} $.  
If $\Gt(\shortalice) = \begin{bsmallmatrix}3&2&0&5\end{bsmallmatrix}$ and $\Gt(\shortbob) = \begin{bsmallmatrix}1&0&0&4\end{bsmallmatrix}$, we have 
$\Gt(\shortFriend(\shortalice,\shortbob)) = 0.905$; that is, a high truth degree. 

Of course, stating that people are friends if they are similar is primitive.
In a real-case scenario, the friendship relationship would likely be approximated by a parametric function, such as a neural network, 
and trained based on constraints in a loss function.

Consider the formula $\phi = \lnot \shortFriend(\shortalice, \shortbob) \lor \shortFriend(\shortbob, \shortalice)$
and a knowledgebase that contains this unique formula $\K = \{ \phi \}$.
The formula states that if $\alice$ is a friend of $\bob$, then $\bob$ is a friend of $\alice$.
    \footnote{$\phi$ is semantically equivalent to $\shortFriend(\shortalice, \shortbob) \imp \shortFriend(\shortbob, \shortalice)$ 
            if we use a material implication defined as $p \imp q \equiv \lnot p \lor q$.}
Let the grounding for $\Gt(\shortFriend(\shortalice, \shortbob))$ depend 
on a trainable neural network for the friendship relation
and a set of features for $\alice$ and $\bob$.
To update $\btheta$ via gradient descent steps, we calculate 
$\frac{\partial \Gt(\phi)}{\partial \Gt(\shortFriend(\shortalice,\shortbob))}$ 
and 
$\frac{\partial \Gt(\phi)}{\partial \Gt(\shortFriend(\shortbob,\shortalice))}$.
Using the operators $\Ns(x) = 1-x$ and $\Sprod(x,y)=x+y-xy$, we get:
\begin{align}
    \Gt(\phi) & =  \Sprod(\Ns(\Gt(\shortFriend(\shortalice,\shortbob))), \Gt(\shortFriend(\shortbob,\shortalice))) \\
            & = 1 - \Gt(\shortFriend(\shortalice,\shortbob)) + \Gt(\shortFriend(\shortalice,\shortbob)) \Gt(\shortFriend(\shortbob,\shortalice))
\end{align}
And the partial derivatives:
\begin{align}
    \label{eq:partial_friendba} \frac{\partial \Gt(\phi)}{\partial \Gt(\shortFriend(\shortbob,\shortalice))} & = \Gt(\shortFriend(\shortalice,\shortbob)) \\
    \label{eq:partial_friendab} \frac{\partial \Gt(\phi)}{\partial \Gt(\shortFriend(\shortalice,\shortbob))} & = -1+\Gt(\shortFriend(\shortbob,\shortalice))
\end{align}%

Equations \eqref{eq:partial_friendab} and \eqref{eq:partial_friendba} give us interesting insights on the power of LTN.
When maximizing the satisfiability of the formula, if $\shortFriend(\shortalice,\shortbob)$ is high,
then $\frac{\partial \Gt(\phi)}{\partial \Gt(\shortFriend(\shortbob,\shortalice))}$ is high.
Intuitively, if $\alice$ being friend with $\bob$ has a high truth value, LTN will tend to increase the truth value of $\bob$ being friend with $\alice$.
Alternatively, if $\shortFriend(\shortbob,\shortalice)$ is low, $\frac{\partial \Gt(\phi)}{\partial \Gt(\shortFriend(\shortalice,\shortbob))}$ is close to $-1$.
That means that if $\bob$ is not considered friend with $\alice$, LTN will tend to decrease the truth value of $\alice$ being friend with $\bob$.

These are different scenarios and ways in which the framework pushes parametric groundings to verify logical constraints.
One can easily imagine how the LTN loss can be used as an additional loss term when training neural networks or embeddings to find a balanced optimum that also satisfies a knowledgebase.

\subsection{Product Real Logic and Prenex Normal Form}
\label{appendix:prodrl_and_pnf}
The study conducted by \cite{van_krieken_analyzing_2022} evaluated a range of operators for differentiability and found Product Real Logic to be the current state-of-the-art operator semantics for differentiable fuzzy logics.
This set of semantics uses the product t-norm $\Tprod(x,y)=xy$, its dual t-conorm $\Sprod(x,y)=x+y-xy$, the standard negation $\Ns(x)=1-x$, and the material implication.
The universal quantifier uses the log-product aggregator $(\log \circ \Atprod)(x_1,\dotsc,x_n) = \sum_{i=1}^{n} \log(x_i)$,
and the existential quantifier uses a smooth maximum.

A limitation of this set of semantics is that it combines operators in both the standard space and the logarithm space, making it challenging to handle certain formulas. 
For example, a formula such as $(\forall u P(u)) \lor (\forall v Q(v))$ cannot be grounded as is, as the $\forall$ operator outputs a log truth degree while the $\lor$ operator expects a normal truth degree. 
One potential solution is to transform the formula into Prenex Normal Form (PNF) $\forall u \forall v (P(u) \lor Q(v))$, but this adds considerable complexity as we now need to ground combinations of individuals from $u$ and $v$. 
PNF formulas that contain universal quantifiers within the scope of existential quantifiers, such as $\forall u \exists v \forall w P(u,v,w)$, are even more difficult to handle. 
\logLTN, on the other hand, is simpler to work with in these cases.

\subsection{Simplification of Log Sigmoid and Log Softmax}
\label{appendix:logsigmoid}
Given the sigmoid function $\sigmoid(x) = \frac{1}{1+e^{-x}}$, for large negative values of $x$, we have $\log(\sigmoid(x)) = \log(\frac{1}{1+N}) = \log(1)-\log(1+N) \approx -N$ where $N$ is a large number. 
The derivative is also simple. Given that $\partialderivative{\sigmoid(x)}{x} = \sigmoid(x) (1-\sigmoid(x))$, we have:
\begin{equation}
    \partialderivative{\log(\sigmoid(x))}{x} = 1-\sigmoid(x)
\end{equation}
Similarly, given that the softmax function $\softmax(\mathbf{z})_i = \frac{e^{z_i}}{\sum_{j=1}^{K} e^{z_j}}$ over a vector $\mathbf{z}$ of $K$ values, $i=1,\dots,K$, 
has the derivatives $\partialderivative{\softmax(\mathbf{z})_i}{\mathbf{z}_j} = \softmax(\mathbf{z})_i (\delta_{ij}- \softmax(\mathbf{z})_j)$, we have:
\begin{equation}
    \partialderivative{\log(\softmax(\mathbf{z})_i)}{\mathbf{z}_j} = \delta_{ij}- \softmax(\mathbf{z})_j
\end{equation}
where $\delta_{ij} = \begin{cases}
        1 \ \ i=j\\
        0 \ \ i \neq j 
    \end{cases}$.

\section{Experiments}
\subsection{Semantic PASCAL-Part Dataset}
\label{appendix:pascalpart}
The semantic PASCAL-Part dataset is a simplified version of the PASCAL-Part dataset introduced by \cite{pascalpart_chen_2014}. 
The goal is to train a type classifier $\istype(x,\bottle)$, $\istype(x,\captype)$, etc., that predicts the type of an object within a bounding box $x$,
 and to train a relation predictor $\ispartof(x,y)$ that determines if one bounding box $x$ is part of another bounding box $y$.

\subsubsection{Constraints}
Training is guided by three constraints based on ground truth examples,
one for labeled type examples, one for pairs of positive examples for $\ispartof$, and one for pairs of negative examples. 
Note that the negative pairs are always sampled in bounding boxes belonging to the same image.
\begin{gather}
    \forall \mathrm{diag}(x_\mathsf{label}, \mathsf{label})\ \istype(x_\mathsf{label}, \mathsf{label}) \\
    \forall \mathsf{pairs_+}\ \ispartof(\mathsf{pairs_+[0]}, \mathsf{pairs_+[1]}) \\
    \forall \mathsf{pairs_-}\ \lnot \ispartof(\mathsf{pairs_-[0]}, \mathsf{pairs_-[1]}) \\
\end{gather}
Where $\mathrm{diag}(x_\mathsf{label}, \mathsf{label})$ is a special quantification that aggregates only arranged pairs of bounding boxes and their labels,
$\mathsf{pairs_+}$ is a batch of positive examples of $\ispartof$, $\mathsf{pairs_-}$ is a batch of negative examples of $\ispartof$.

There are two constraints stating that $\ispartof$ is antisymmetric and antireflexive.
\begin{gather}
    \forall \mathsf{pairs}\ \lnot \big( \ispartof(\mathsf{pairs[0]}, \mathsf{pairs[1]}) \land \ispartof(\mathsf{pairs[1]}, \mathsf{pairs[0]}) \big) \\
    \forall x\ \lnot \ispartof(x, x)
\end{gather}

Finally, and most importantly, there are mereological constraints that the types and their meanings.
The mereological constraints are based on the ontologies in Table \ref{tab:ontologies} rearranged in the shape of \eqref{eq:constraint_mereo1_appendix} and \eqref{eq:constraint_mereo2_appendix}.
\begin{gather}
    \forall x,y \ \istype(x,\bottle) \land \ispartof(y,x) \imp (\istype(y,\captype) \lor \istype(x,\body)) \label{eq:constraint_mereo1_appendix} \\
    \forall x,y \ \istype(x,\captype) \land \ispartof(x,y) \imp \istype(y,\bottle) \label{eq:constraint_mereo2_appendix} 
\end{gather}

\subsubsection{Features}
This setup has been previously implemented by \cite{donadello_ltnimage_2017,van_krieken_semi-supervised_2019}. 
In these previous works, the bounding boxes were grounded with the object class predictions produced by an object detector trained on the PASCAL-Part dataset. 
This means that LTN was only used to correct the predictions of the detector. 
In contrast, we increased the difficulty by grounding the bounding boxes with a latent vector of 1024 features, which is produced by an intermediate layer of the FasterRCNN~\cite{Ren_fasterrcnn_2017}.
This means that LTN must also learn the final layers of the object classifier.
In addition, we included the coordinates of each bounding box and their overlapping ratio when grounding the pairs.

For a work that trains an object detector architecture end-to-end on all types using LTN, refer to \cite{manigrasso_faster-ltn_2021}.

\subsubsection{Violation metrics}
In addition to the standard accuracy metrics for each predictor (PR AUC for the part-of predictor and balanced accuracy for the type predictor), we also assess their combined performance by measuring \emph{violations} of the mereological constraints.
It's important to note that not all misclassifications are equal. For instance, if a model predicts a cap inside a plant, it suggests that the system has learned less from prior knowledge compared to a model that misclassifies the cap in a different context, such as a wrong bottle.

This concept is visually represented in Figure \ref{fig:examples_violations}. Our results demonstrate that not only does \logLTN\ exhibit significantly fewer violations, but it also reaches this outcome much faster compared to other baseline approaches, as shown in Figure \ref{fig:pascalpart_violations}.

\begin{figure}
    \centering
    \begin{subfigure}{0.45\textwidth}
      \centering
      \includegraphics[width=0.8\linewidth]{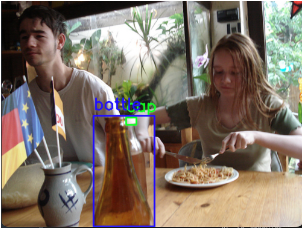}
      \caption{Nonviolation (a cap port of a bottle)}
      \hfill
    \end{subfigure}
    ~
    \begin{subfigure}{0.45\textwidth}
      \centering
      \includegraphics[width=0.8\linewidth]{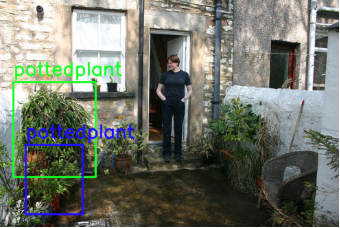}
      \caption{Violation (a plant part of another plant)}
    \end{subfigure}
    
    \caption{Examples of false positives on Semantic PASCAL-Part. Green is predicted in blue.}
    \label{fig:examples_violations}
  \end{figure}

\begin{figure}
    \centering
    \includegraphics[width=0.5\linewidth]{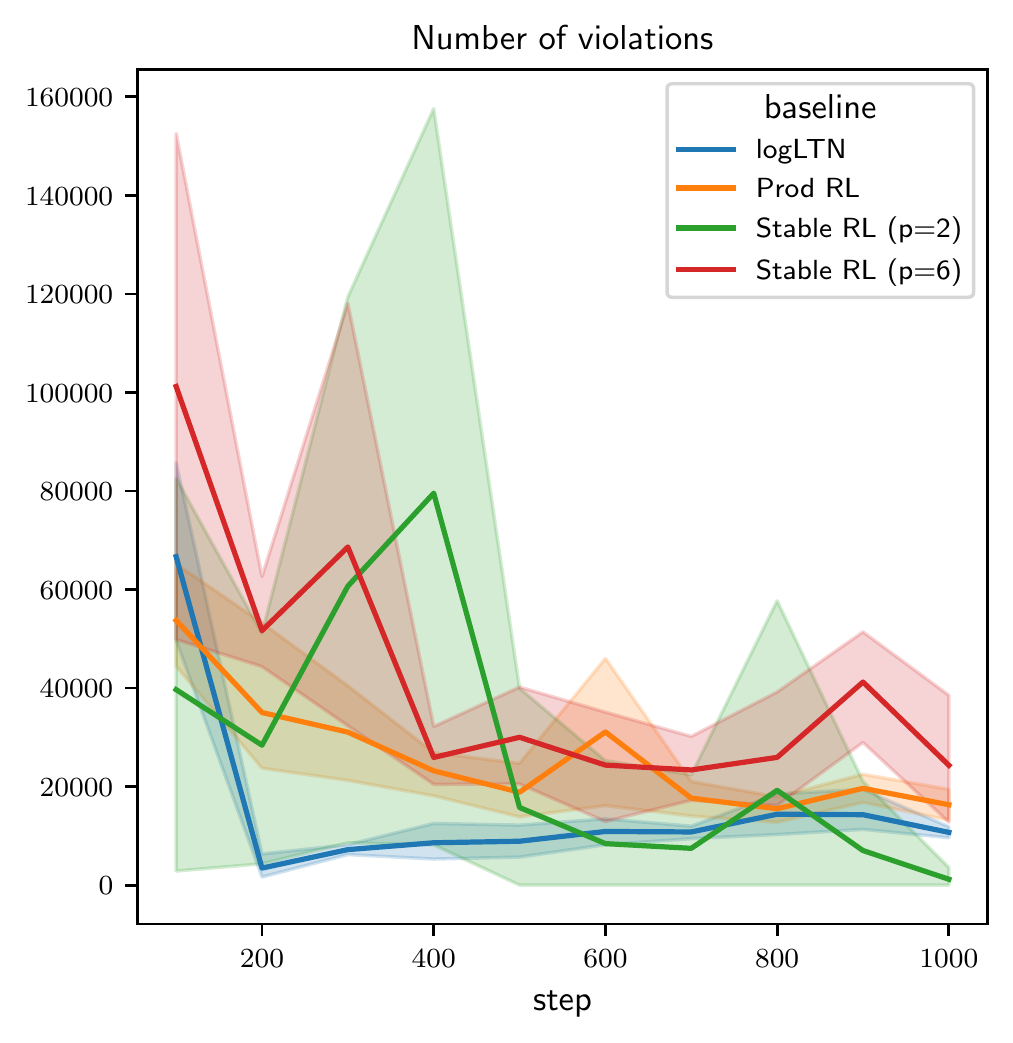}
    \caption{Violations over training time. The first 100 training steps are used for pretraining without the mereological constraints.}
    \label{fig:pascalpart_violations}
\end{figure}

\begin{table}
    \begin{tabular}{l|l}
        \toprule
        Whole & Parts \\
        \midrule
        aeroplane & artifact\_wing, body, engine, stern, wheel \\
        bicycle & chain\_wheel, handlebar, headlight, saddle, wheel \\
        bird & animal\_wing, beak, tail, eye, head, leg, neck, torso \\
        bottle & body, cap \\
        bus & bodywork, door, headlight, license\_plate, mirror, wheel, window \\
        car & bodywork, door, headlight, license\_plate, mirror, wheel, window \\
        cat & ear, tail, eye, head, leg, neck, torso \\
        cow & ear, horn, muzzle, tail, eye, head, leg, neck, torso \\
        dog & ear, muzzle, nose, tail, eye, head, leg, neck, torso \\
        horse & ear, hoof, muzzle, tail, eye, head, leg, neck, torso \\
        motorbike & handlebar, headlight, saddle, wheel \\
        person & arm, ear, ebrow, foot, hair, hand, mouth, nose, eye, head, leg, neck, torso \\
        pottedplant & plant, pot \\
        sheep & ear, horn, muzzle, tail, eye, head, leg, neck, torso \\
        train & coach, headlight, locomotive \\
        tvmonitor & screen \\
        boat & \\
        chair & \\
        sofa & \\
        diningtable & \\
        \bottomrule
    \end{tabular}
\caption{Full ontologies in Semantic PASCAL-Part}
\label{tab:ontologies}        
\end{table}

\subsection{Implementation Details}
\label{appendix:implementation}

logLTN is made available as a subpackage of the LTN library.\footnote{\url{https://github.com/logictensornetworks/logictensornetworks}}
Table \ref{tab:appendix_models} details the neural models used in each experimental task and Table \ref{tab:appendix_operator_params} details the hyperparameters of the baselines.
The Adam optimizer is trained with a learning rate of $0.002$ in the clustering task and a learning rate of $0.001$ for the MNISTAdd and Semantic Image Interpretation (SII) task.
We run our experiments on a machine equipped with a Tesla T4 GPU.

\begin{table}
    \begin{center}
    \begin{tabular}{llll}
        \toprule
        Baseline  & Operator & Parameter & Schedule \\
        \midrule
        logLTN & $\exists: \LME$ & $\alpha$ & Linear: $[1,4]$\\
        Prod RL & $\exists: \mathrm{pM}$ & $p$ &  Linear: $[1,6]$\\
        Stable RL & $\exists: \mathrm{pM}$ & $p$ &  Linear: $[1,6]$\\
        \bottomrule
    \end{tabular}
    \end{center}
    
    \begin{tablenotes}
        \small
        \item Linear $[a,b]$: the parameter increases linearly from $a$ to $b$ over the training steps,
        \item $\mathrm{pM}(x_1, \dots, x_n \mid p) = (\frac{1}{n} \sum_{i=1}^{n} x_i^p)^{\frac{1}{p}}$,
        \item $\mathrm{pME}(x_1, \dots, x_n \mid p) = 1 - (\frac{1}{n} \sum_{i=1}^{n} (1- x_i)^p)^{\frac{1}{p}}$.
      \end{tablenotes}
    \caption{Overview of the hyperparameters used in the experiments for each fuzzy operator configuration.}
    \label{tab:appendix_operator_params}
\end{table}

\newcommand{\Conv}{\mathrm{Conv}}
\newcommand{\Dense}{\mathrm{Dense}}

\begin{table*}[b]
    \centering
    \begin{tabular}{llll}
        \toprule
        Task  & Predicate & Model & Output layer \\
        \midrule
        Clustering & $C(x,c)$ & $\mathrm{Dense}(16)^\ast$, $\mathrm{Dense}(16)^\ast$, $\mathrm{Dense}(16)$ & $\mathrm{Softmax}$ \\
        MNISTAdd & $\isdigit(x,d)$ & \makecell[tl]{$\Conv(6,5)^\ast$, $\mathrm{MP}(2,2)$, $\Conv(16,5)^\ast$, $\mathrm{MP}(2,2)$,\\
                 $\Dense(100)^\ast$, $\Dense(84)^\ast$, $\Dense(10)$} & $\mathrm{Softmax}$\\
        SII & $\mathsf{type}(x,t)$ & \makecell[tl]{$\mathrm{Dense}(512)^\ast$, $\mathrm{Dense}(256)^\ast$, $\mathrm{Dense}(256)^\ast$,\\ 
                $\mathrm{Dense}(128)^\ast$, $\mathrm{Dense}(128)$} & $\mathrm{Softmax}$ \\
        SII & $\mathsf{partof}(x,y)$ & \makecell[tl]{$\mathrm{Concat}(\mathsf{type}_{\mathrm{Model}}(x),\mathsf{type}_{\mathrm{Model}}(y))$, $\mathrm{Dense}(512)^\ast$, \\ 
                $\mathrm{Dense}(256)^\ast$, $\mathrm{Dense}(256)^\ast$, $\mathrm{Dense}(128)^\ast$, $\mathrm{Dense}(128)$}      &  $\mathrm{Sigmoid}$ \\
        \bottomrule
    \end{tabular}
    \begin{tablenotes}
        \small
        \item $\ast$: layer ends with an $\mathsf{elu}$ activation,
        \item $\mathrm{Dense}(k)$: linear layer with $k$ units,
        \item $\Conv(f,k)$ : 2D convolution layer with $f$ filters and a kernel of size $k$,
        \item $\mathrm{MP}(w,h)$ : max pooling operation with a $w \times h$ pooling window.
      \end{tablenotes}
    \caption{Overview of the neural architectures used in each task.}
    \label{tab:appendix_models}
\end{table*}

\section{Theory}
\subsection{Log-Negations}
\subsubsection{Proof of Theorem \ref{thm:log_not_sigmoid}}
\label{appendix:proof_log_not_sigmoid}
\lognotsigmoid*
\begin{proof}
    \begin{align*}
        (\log \circ \Ns) (x) &= \log(1-S(x)) = \log(S(x))+\log(\frac{1-S(x)}{S(x)})\\
        & =\log(S(x)) + \log(\frac{1 - \frac{e^x}{e^x+1}}{ \frac{e^x}{e^x+1}}) \\
        & = \log(S(x)) + \log(\frac{e^x + 1 - e^x}{e^x})\\
        & = \log(S(x)) - x \qedhere
    \end{align*}
\end{proof}
    
\subsubsection{Proof of Theorem \ref{thm:log_not_softmax}}
\label{appendix:proof_log_not_softmax}
\lognotsoftmax*
\begin{proof}
    \begin{align*}
        (\log \circ \Ns) (\softmax(\mathbf{z})_i)
         &= \log(1-\softmax(\mathbf{z})_i) \\
         & = \log(\softmax(\mathbf{z})_i) + \log(\frac{1-\softmax(\mathbf{z})_i}{\softmax(\mathbf{z})_i})\\
         &=  \log(\softmax(\mathbf{z})_i) + \log(\frac{1 - \frac{e^{z_i}}{\sum_{j=1}^{K} e^{z_j}}}{\frac{e^{z_i}}{\sum_{j=1}^{K} e^{z_j}}})\\
        & =  \log(\softmax(\mathbf{z})_i) + \log(\frac{\sum_{j=1}^{K} e^{z_j} - e^{z_i}}{e^{z_i}}) \\
        &= \log(\softmax(\mathbf{z})_i) + \log(\sum_{\substack{j=1\\j\neq i}}^K e^{z_j}) - z_i \qedhere
    \end{align*}
\end{proof}

\subsection{Relaxation of the Disjunction}
\subsubsection{Bounds of LogSumExp}
\label{appendix:bounds_LSE_lower}
We start from Equation \eqref{eq:bounds_LSE_traditional} and substract $-\frac{\log(n)}{a}$ in all parts of the inequality:
\begin{equation}
    \label{eq:appendix_LSE_lb_1}
    \max(\mathbf{x}) - \frac{\log(n)}{\alpha} \leq \LSE(\mathbf{x} \mid \alpha, C) - \frac{\log(n)}{\alpha} \leq \max(\mathbf{x})
\end{equation}

And:
\begin{align}
    \LSE(\mathbf{x} \mid \alpha, C) - \frac{\log(n)}{\alpha} &= \frac{1}{\alpha} \Bigg( C + \log(\sum_{i=1}^n e^{\alpha x_i - C}) - \log(n)\Bigg) \\
            & \quad = \frac{1}{\alpha} \left( C + \log(\frac{\sum_{i=1}^n e^{\alpha x_i-C}}{n} ) \right) \\
   \label{eq:appendix_LSE_lb_2}         & \quad = \LME(\mathbf{x} \mid \alpha, C)
\end{align} 

Given \eqref{eq:appendix_LSE_lb_1} and \eqref{eq:appendix_LSE_lb_2}, we obtain the inequality for $\LME$:
\begin{equation}
    \max(\mathbf{x}) - \frac{\log(n)}{\alpha} \leq \LME(\mathbf{x} \mid \alpha, C) \leq \max(\mathbf{x})
\end{equation}

\subsection{De Morgan's Inequalities}
\subsubsection{Proof of Theorem \ref{thm:log_demorgans}}
\label{appendix:proof_nnfdemorgans}
\logdemorgans*

\begin{proof}
    Let $\G(P) = x$, $\G(Q) = y$, and $\G(P(u)) = [x_1, \dots, x_n]$, where $x, y, x_1, \dots, x_n \in [0,1]$.
    Grounding the operators with the definitions of Section \ref{s:logltn_semantics}, we need to prove:
    \begin{align}
        \Ns(\Tprod(x,y)) &\geq \Smax(\Ns(x),\Ns(y)) \label{eq:demorgan_tprod}\\
        \Ns(\Smax(x,y)) &\geq \Tprod(\Ns(x),\Ns(y)) \label{eq:demorgan_smax}\\
        \Ns(\Atprod(x_1,\cdots,x_n)) &\geq \Asmax(\Ns(x_1),\dots,\Ns(x_n)) \label{eq:demorgan_atprod}\\
        \Ns(\Asmax(x_1,\cdots,x_n)) &\geq \Atprod(\Ns(x_1),\dots,\Ns(x_n)) \label{eq:demorgan_asmax}
    \end{align}
    \eqref{eq:demorgan_tprod} and \eqref{eq:demorgan_smax} are specializations of the other two equations by working with $n=2$ values.
    We focus on proving \eqref{eq:demorgan_atprod}.
    Indeed, we can easily retrieve \eqref{eq:demorgan_asmax} from \eqref{eq:demorgan_atprod}.
    \paragraph{Equivalence of \eqref{eq:demorgan_atprod} and \eqref{eq:demorgan_asmax}}
    Posing $x_i'=1-x_i$ for $i=1\dots n$, we have:
    \begin{align}
        & \quad \Ns(\Atprod(x_1,\cdots,x_n)) \geq \Asmax(\Ns(x_1),\dots,\Ns(x_n))\\
        \iff & \quad \Ns(\Ns(\Atprod(x_1,\cdots,x_n))) \leq \Ns(\Asmax(\Ns(x_1),\dots,\Ns(x_n))) \\
        \iff & \quad \Atprod(x_1,\cdots,x_n) \leq \Ns(\Asmax(\Ns(x_1),\dots,\Ns(x_n))) \\
        \iff & \quad \Atprod(\Ns(x_1'),\cdots,\Ns(x_n')) \leq \Ns(\Asmax(x_1',\dots,x_n'))\\
        \iff & \quad \Ns(\Asmax(x_1',\dots,x_n')) \geq \Atprod(\Ns(x_1'),\cdots,\Ns(x_n'))
    \end{align}

    \paragraph{Proof of \eqref{eq:demorgan_atprod}}
    In the left-hand side of the inequality, we have:
    \begin{equation}
        \Ns(\Atprod(x_1,\cdots,x_n)) = 1-\prod_{i=1}^{n} x_i
    \end{equation}
    and in the right-hand side:
    \begin{equation}
        \Asmax(\Ns(x_1),\dots,\Ns(x_n)) = \max_{i=1}^n (1-x_i) = 1-\min_{i=1}^n (x_i)
    \end{equation}
    Replacing them in the original inequality, and denoting $j = \argmin_{i=1}^n (x_i)$, we obtain:
    \begin{align}
        & \quad 1-\prod_{i=1}^{n} x_i \geq 1-\min_{i=1}^n (x_i) \\
        \iff & \quad \prod_{i=1}^{n} x_i \leq \min_{i=1}^n (x_i) \\
        \iff & \quad x_j \prod_{\substack{i=1\\i\neq j}}^n x_i \leq x_j \\
        \iff & \quad \prod_{\substack{i=1\\i\neq j}}^n x_i \leq 1
    \end{align}
    Which is true because all $x_i \in [0,1]$.
\end{proof}

\subsubsection{Tightness of the bounds}
\label{appendix:bound_tightness}
We provide an analysis of the tightness of the bounds of the De Morgan's inequalities.
We measure the tightness for the quantifier variants of the inequalities, as this generalizes to the case $n=2$.
\paragraph{Tightness of \eqref{eq:demorgan_atprod}}
Let us first characterize the maximum value of the bound.
We are interesting in finding the values $(x_1^*,\dots,x_n^*)$ that maximize the difference of the two members in \eqref{eq:demorgan_atprod}.
\begin{align}
    (x_1^*,\dots,x_n^*) = \argmax_{(x_1,\dots,x_n)}  \Delta_\land(x_1,\dots,x_n)
\end{align}
with
\begin{align}
     \Delta_\land(x_1,\dots,x_n) &= \Ns(\Atprod(x_1,\cdots,x_n)) - \Asmax(\Ns(x_1),\dots,\Ns(x_n)) \\
            &= (1-\prod_{i=1}^{n} x_i) - (1-\min_{i=1}^n(x_i) ) = \min_{i=1}^n(x_i) - \prod_{i=1}^n x_i 
\end{align}
Let $j = \argmin_{i=1}^n (x_i)$.
\begin{align}
    \Delta_\land(x_1,\dots,x_n) = x_j - \prod_{i=1}^n x_i =  x_j (1-\prod_{\substack{i=1\\i\neq j}}^n x_i)
\end{align}
For any set of values, we have $\Delta_\land(x_j,\dots,x_j) \geq \Delta_\land(x_1,\dots,x_n)$, 
as $(1-x_j^{n-1}) \geq (1-\prod_{\substack{i=1\\i\neq j}}^n x_i)$ given that $x_j = \min_{i=1}^n (x_i)$.
Therefore, $(x_1^*,\dots,x_n^*)$ is actually a tuple of the same value taken $n$ times,
and we reduce the search to:
\begin{align}
    x^* = \argmax_{x}  \Delta_\land(x) = \argmax_x x - x^n
\end{align}
Given that $x \in [0,1]$, we find by first and second order derivative analysis that $\Delta_\land(x)$ is concave on the whole domain and has a single maximum at:
\begin{align}
    \label{eq:demorgan_and_max_bound}
    x^* & = n^{-(\frac{1}{n-1})} 
\end{align}

For $n=2$, that is, when applying the De Morgan's law to a simple conjunction of two terms $x_1$ and $x_2$, the bound of the inequality is maximal when $x_1=x_2=0.5$, giving $\Delta_\land = 0.25$.
However, on average, the bound is smaller. 
By sampling 10e4 $\times$ 10e4 points linearly on the domain $[0,1] \times [0,1]$, we find an average $\Delta_\land = 0.083167$. 
We visualize the bound of the inequality in Figure \ref{fig:demorgan_bounds}.
It is zero when any $x_i=0$ or $x_i=1$, in which case the De Morgan's laws are verified.

For larger $n$, that is, when applying the quantifier equivalent of the De Morgan's law, the maximal value of the bound becomes larger.
For example, with $n=8$, the bound is maximal for $x_1=\dots=x_8\approx 0.743$, with $\Delta_\land \approx 0.650$.
However, on average, the bound stays small.
By sampling 10e8 points linearly in the domain $[0,1]^8$, we have an average bound of $0.07135$.

\begin{figure}
    \centering
    \includegraphics[width=0.8\textwidth]{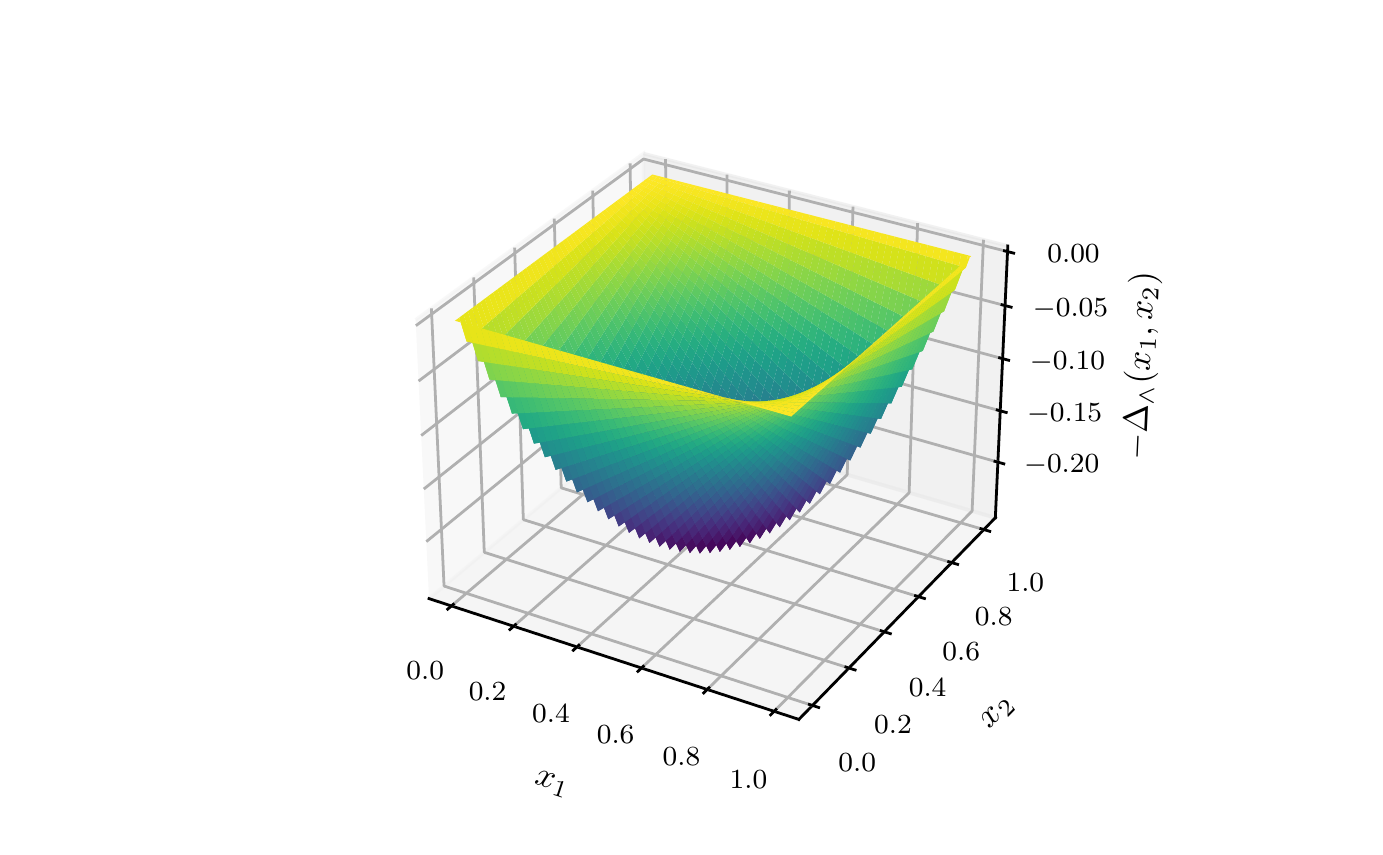}
    \caption{De Morgan's inequality bound $-\Delta_\land(x_1,x_2)$}
    \label{fig:demorgan_bounds}
\end{figure}

\paragraph{Tightness of \eqref{eq:demorgan_asmax}}
As we have shown in Section~\ref{appendix:proof_nnfdemorgans}, \eqref{eq:demorgan_asmax} is equivalent to \eqref{eq:demorgan_atprod} by replacing the input values with $x_i'=1-x_i$ for $i=1\dots n$.
Therefore, the maximum value of the bound is the same, except that it happens on a complement set of value.
We have:
\begin{align}
    x^* & = \argmax_{x}  \Delta_\lor(x) \\ 
        &= \argmax_{x} \Ns(\Asmax(x,\cdots,x)) - \Atprod(\Ns(x),\dots,\Ns(x)) \\
            &= \argmax_x 1 - x - (1-x)^{n} \\
            \label{eq:demorgan_or_max_bound}
            &= 1 - n^{-(\frac{1}{n-1})} 
\end{align}

For $n=2$, when applying the De Morgan's law to a simple disjunction of two terms $x_1$ and $x_2$, $\Delta_\lor$ is maximal when $x_1=x_2=0.5$.
We find again that the peak is $\Delta_\lor = 0.25$, and by sampling 10e4 $\times$ 10e4 points linearly on the domain $[0,1] \times [0,1]$, we find an average $\Delta_\lor =  0.083167$. 

For $n=8$, this time the bound is maximal when $x_1=\dots=x_8\approx 0.257$, with $\Delta_\lor \approx 0.650$.
Sampling 10e8 points linearly in $[0,1]^8$, we still get an average bound of $0.07135$.

\subsection{Common Fuzzy Properties for \logLTN}
We provide an overview of common fuzzy properties that are verified by the operator configuration in Table \ref{tab:fuzzy_properties}. 
The distributivity of $\land$ over $\lor$ for \logLTN\ is the only new property and can be demonstrated easily.
Let $\G(P)=x$, $\G(Q)=y$, and $\G(R)=z$ be the grounding of three predicates. We have:
\begin{align}
    & \G(P \land (Q \lor R)) = \G((P \land Q) \lor (P \land R)) \\
    \iff & \qquad  x \max(y,z) = \max(xy, xz)
\end{align}

\begin{table}
    \centering
    \begin{tabular}{l|c|c|c}
        \toprule
        Property                                            & \logLTN       & Prod RL       & Stable RL \\
        \midrule
        Commutativity of $\land$, $\lor$                    & \checkmark    & \checkmark    &  \checkmark \\
        Associativity of $\land$, $\lor$                    & \checkmark    & \checkmark    &  \checkmark \\
        De Morgan's laws for $\land$ and $\lor$             &               & \checkmark    &  \checkmark \\
        Material Implication                                & \checkmark    & \checkmark    & \checkmark  \\
        Distributivity of $\land$ over $\lor$               & \checkmark    &               &             \\
        Distributivity of $\lor$ over $\land$               &               &               &             \\
        Double negation, i.e.\ $\G(\lnot \lnot p) = \G(p)$                 & \checkmark    & \checkmark    & \checkmark  \\
        Law of non-excluded middle, i.e.\ $\G(p \land \lnot p) = 0$                          &               &               &             \\
        Law of non-contradiction, i.e.\ $\G(p \lor \lnot p) = 1$                           &               &               &             \\
        Conjunction elimination, i.e.\ $\G(p \land p) \leq \G(p)$ & \checkmark & \checkmark & \checkmark \\
        Disjunction amplification, i.e.\ $\G(p \lor p) \geq \G(p)$ & \checkmark & \checkmark & \checkmark \\
        $\forall$ defined as a generalization of $\land$    & \checkmark    & \checkmark    &             \\
        $\exists$ defined as a generalization of $\lor$     & \checkmark    &               &             \\
        De Morgan's laws for $\forall$ and $\exists$        &               &               &             \\
        
        \bottomrule
    \end{tabular}
    \caption{Fuzzy properties for \logLTN, Product Real Logic (Prod RL), and Stable Product Real Logic (Stable RL).}
    \label{tab:fuzzy_properties}
\end{table}

\end{document}